\DeclareMathOperator*{\argmax}{argmax}
\newtheorem{theorem}{Theorem}
\newtheorem{corollary}{Corollary}
\newcommand{\f}{\textbf{f}}
\newcommand{\m}{\textbf{m}}
\newcommand{\x}{\textbf{x}}
\newcommand{\y}{\textbf{y}}
\newcommand{\z}{\textbf{z}}
\newcommand{\uz}{\textbf{u}}
\newcommand{\calX}{\mathcal{X}}
\newcommand{\reals}{\mathds{R}}
\newcommand{\data}{\mathcal{D}}
\newcommand{\N}{\mathcal{N}}
\newcommand{\KX}{K_X}
\newcommand{\kX}{\textbf{k}_X}
\newcommand{\KZ}{K_Z}
\newcommand{\kZ}{\textbf{k}_Z}
\begin{document}

%
\runningtitle{Inducing Point Allocation for Sparse Gaussian Processes in High-Throughput Bayesian Optimisation}

%

\twocolumn[

\aistatstitle{Inducing Point Allocation for Sparse Gaussian Processes in High-Throughput Bayesian Optimisation}

\aistatsauthor{ Henry B. Moss \And Sebastian W. Ober \And  Victor Picheny }

\aistatsaddress{ Secondmind.ai \And Secondmind.ai \And Secondmind.ai } ]

\begin{abstract}
Sparse Gaussian processes are a key component of high-throughput Bayesian optimisation (BO) loops; however, we show that existing methods for allocating their inducing points severely hamper optimisation performance. By exploiting the quality-diversity decomposition of determinantal point processes, we propose the first inducing point allocation strategy designed specifically for use in BO. Unlike existing methods which seek only to reduce global uncertainty in the objective function, our approach provides the local high-fidelity modelling of promising regions required for precise optimisation. More generally, we demonstrate that our proposed framework provides a flexible way to allocate modelling capacity in sparse models and so is suitable for a broad range of downstream sequential decision making tasks.
\end{abstract}

\section{Introduction}

Countless design tasks in science, industry and machine learning can be formulated as high-throughput optimisation problems, as characterised by access to substantial evaluation budgets and an ability to make large batches of evaluations in parallel. Prominent examples include high-throughput screening within drug discovery \citep{hernandez2017parallel}, DNA sequencing, and experimental design pipelines, where automation allows researchers to efficiently oversee thousands of scientific experiments, field tests and simulations through sensor arrays and cloud compute resources \citep{kandasamy2018parallelised}. However, such design tasks tend to have large search spaces and multi-modal optimisation landscapes such that, even under large optimisation budgets, only a small proportion of candidate solutions can ever be evaluated, and often only with significant levels of observation noise. Consequently, most existing optimisation routines are unsuitable, as brute-force methods require too many evaluations.

\begin{figure}%
\begin{center}
    \includegraphics[width= 0.95\columnwidth]{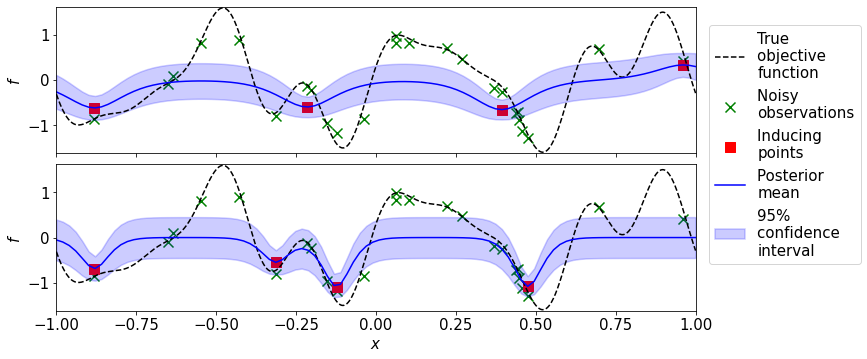}%
\end{center}
\caption{A toy problem showing two sparse GP surrogate models, one with its inducing points chosen using an existing method (top) and the other using one of our proposed BO-specific methods (bottom) that focuses modelling resources into promising areas of the search space.  Our model is better suited for assisting BO find this function's minima.
}%
\label{fig::demo}%
\end{figure}

Bayesian optimisation \citep[BO, see][for a review]{shahriari2016taking} has surfaced as the \textit{de facto} approach for solving noisy black-box optimisation tasks under restricted evaluation budgets, with numerous successful applications across the empirical sciences and industry. However, vanilla BO relies on Gaussian processes \citep[GPs,][]{rasmussen2006gaussian},
which incur a significant computational overhead for each individual optimisation step. This cost becomes increasingly unwieldy as data volumes increase,
making it unsuitable for the high-throughput tasks motivated above. 

Several ways to scale up BO with large data volumes have been explored, including using local models \citep{eriksson2019scalable} or neural networks \citep{hernandez2017parallel}.
Among those alternatives, using sparse GPs \citep{titsias2009variational} are particularly attractive as they dramatically reduce the computational cost of GPs and have enabled BO to be applied to a range of applications including molecular search \citep{griffiths2020constrained}, laser optimisation \citep{mcintire2016sparse}, model optimisation \citep{nickson2014automated}, alloy design \citep{yang2021sparse}, and risk-adverse optimisation \citep{picheny2022bayesian}. 

In a nutshell, sparse GPs replace the full set of observations by a smaller representative set of pseudo-observations referred to as \textit{inducing points}.
The choice of the inducing point locations has a critical influence on the behaviour of the model, as it encodes local expressivity.
However, existing approaches for inducing point allocation (IPA) focus purely on regression tasks, i.e., the global accuracy of models, and so sacrifice high-fidelity (local) modelling of promising regions which is required, as confirmed by our experiments, for effective optimisation (Figure \ref{fig::demo}). For this reason, there is a need for BO-specific IPA strategies; however, to our knowledge, no such methods exist in the literature.

Our contributions can be summarised as follows:
\begin{enumerate}
  \setlength\itemsep{0.1em}
    \item We demonstrate that existing IPA strategies do not support high-precision BO.
    \item We introduce the use of quality-diversity decomposed DPPs as an IPA, allowing the trade-off of an IPA's diversity against an underlying preference.
    \item We propose a guide for practical BO-specific IPA methods along with several specific recommendations. 
    \item We show that our methods out-perform established baselines across synthetic and real-world high-throughput optimisation and active learning tasks. 
\end{enumerate}


\section{Background}
\label{sec:background}

\textbf{Bayesian Optimisation.} BO is a highly data-efficient method for finding the optima of a smooth function $f:\mathcal{X}\rightarrow\mathds{R}$. By using a probabilistic surrogate model, typically a GP, coupled with a data acquisition strategy, evaluations are focused into promising areas of the search space $\mathcal{X}$, allowing identification of good solutions within heavily constrained evaluation budgets.

Popular examples of data acquisition strategies include expected improvement \citep[EI,][]{jones1998efficient}, knowledge gradient \citep{frazier2008knowledge}, entropy search \citep{hennig2012entropy}, or 
Thompson sampling \citep[TS,][]{kandasamy2018parallelised}. 
While our framework is not specific to any acquisition strategy, we focus mainly on Thompson sampling, a simple yet effective strategy that evaluates the maxima (minima) of random samples from the surrogate model when performing black-box maximisation (minimisation). TS is an obvious choice for high-throughput BO due to its natural ability to handle highly parallelised optimisation resources, 
e.g. for molecular search \citep{hernandez2017parallel} or distributed computing \citep{kandasamy2018parallelised}. 
Moreover, \citet{vakili2021scalable} have recently shown that the decoupled TS approach of \citet{wilson2020efficiently} can provide a drastic efficiency gain over traditional TS without significant impact on regret performance.

\textbf{Gaussian Processes.} GP models are a popular choice as surrogate models for BO, as they combine flexibility with reliable uncertainty estimates.
A GP can be defined as an infinite collection of random variables, any finite number of which are distributed according to a multivariate Gaussian \citep{rasmussen2006gaussian}.
Consider a dataset $\data = (X, \y)$ consisting of $N$ input-output pairs $(\x_n, y_n)$, where $\x \in \calX$ and $y \in \reals$.
In Gaussian process regression, we model this dataset as a noisy realization of a latent function,
\begin{align*}
    y_n &= f(\x_n) + \epsilon_n, \quad \epsilon \sim \N(0, \sigma^2),
\end{align*}
where we have given $f$ a GP prior, $f \sim \mathcal{GP}(\mu_0(\cdot), k(\cdot, \cdot))$, and $\sigma^2$ is the noise variance.
$\mu_0: \calX \rightarrow \reals$ is the (prior) mean function, whereas $k: \calX \times \calX \rightarrow \reals$ is a positive semidefinite covariance function or kernel; taken together, these are sufficient to fully describe the GP prior, which states that $f(X) \sim \N(\mu_0(X), \KX)$, where we have defined $\KX \coloneqq \left[k(\x, \x')\right]_{\x, \x' \in X}$ (abusing the notation slightly).
For notational simplicity, we henceforth assume the mean function to be zero.
By conditioning on the observed data, we can compute the exact posterior $p(f|\y)$ as a GP with mean and covariance functions
\begin{align}
\label{eq:gp-predict}
    \mu(\x) &= \kX(\x)^T(\KX + \sigma^2 I_N)^{-1}\y \\
    \nonumber
    \Sigma(\x, \x') &= k(\x, \x') - \kX(\x)^T (\KX + \sigma^2 I_N)^{-1}\kX(\x'),
\end{align}
where we have defined $\kX \coloneqq \left[k(\x', \x) \right]_{\x' \in X}$ and the identity matrix $I_N \in \reals^{N \times N}$.
While we can compute the exact posterior predictive using these equations, in practice we are often limited to using small datasets, as computing the required $(\KX + \sigma^2 I_N)^{-1}$ requires $O(N^3)$ computational complexity and $O(N^2)$ memory.


\textbf{Sparse Variational Gaussian Processes.} To mitigate the computational cost of GP modelling and allow for larger datasets, sparse variational approaches \citep{titsias2009variational, hensman2013gaussian} have been developed.
Instead of conditioning on the $N$ training points, sparse GPs learn a set of $M << N$ \emph{inducing variables} $\uz \in \reals^M$, defined at \emph{inducing locations} $Z = \{\z_m\}_{m=1}^M, \z_m \in \calX$, so that $\uz = f(Z)$.
By defining an approximate posterior over the inducing variables $q(\uz) = \N(\uz; \m, S)$ with variational parameters $\m \in \reals^M, S \in \reals^{M \times M}$, we can simultaneously learn the inducing locations and variational parameters by maximizing the \emph{evidence lower bound (ELBO)}:
\begin{align*}
    \mathcal{L} = \mathbb{E}_{q(\f)}\left[\log p(\y| \f)\right] - \mathrm{KL}(q(\uz) || p(\uz)),
\end{align*}
where $q(\f) = \N(\f; \mu_\f, \Sigma_\f)$ is the approximate posterior over the function values defined at the data points implied by conditioning on $\uz$,
\begin{align*}
    \mu_\f &= \kZ(X)^T\KZ^{-1}\m \\
    \Sigma_\f &= \KX + \kZ(X)^T \KZ^{-1}(S - \KZ)\KZ^{-1}\kZ(X),
\end{align*}
where we have defined $\kZ(\cdot)$ and $\KZ$ analogously to $\kX(\cdot)$ and $\KX$, respectively.
We refer to this model as the \emph{sparse variational Gaussian process (SVGP)}.
The SVGP model requires $O(M^2 \tilde{N})$ computational complexity and $O(M\tilde{N})$ memory, where $\tilde{N}$ is the size of a minibatch, a significant saving over the exact GP.


While the inducing locations can be learned according to the ELBO along with the model hyperparameters and variational parameters, 
\citet{burt2020convergence} argues that this yields a very challenging high-dimensional and non-convex optimization task with a complicated dependence structure that is difficult to solve, converges slowly and often provides sub-optimal models.
Whereas for regression this may be allowable with sufficient computational resources, for BO we must be able to reliably and quickly fit models, and therefore it would be preferable to allocate $Z$ \emph{a priori} and keep them fixed.
Moreover, optimizing $Z$ according to the ELBO encourages the inducing points to approximate the posterior globally \citep{matthews2016sparse}, which we will argue is wasteful for BO applications.
Therefore, we focus the remainder of our work on methods for inducing point allocation (IPA), which we will use to set the inducing points at the start of each BO step.
We start by describing prior work for IPA, which focuses on regression, before moving to our BO-specific IPA contributions.

\begin{figure*}%
\subfloat[Exact GP]{\includegraphics[height= 0.31\textwidth]{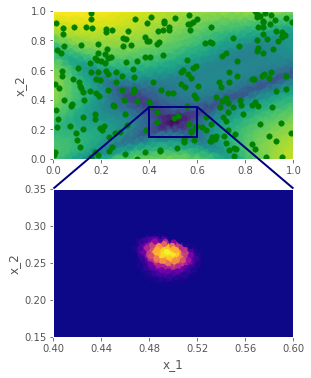}}%
\subfloat[SVGP with K-means]{\includegraphics[height= 0.31\textwidth]{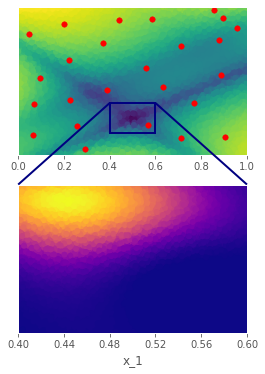}}%
\subfloat[SVGP with CVR]{\includegraphics[height= 0.31\textwidth]{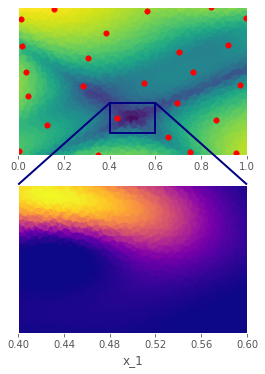}}%
\subfloat[SVGP with IMP-DPP ]{\includegraphics[height= 0.31\textwidth]{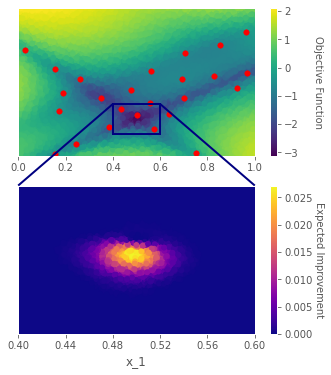}\label{subfig:CIR}}%
\caption{ The top row shows (a) 250 available training data points (green) alongside (b,c,d) three different 25 point IPAs (red) chosen for a function minimisation task. Existing approaches which (b) use the centroids from a k-means clustering of the available data or (c) use the CVR strategy provide balanced coverage of the whole search space. In contrast, our IMP-DPP strategy (d) focuses modelling resources into promising central areas. The bottom rows show expected improvement acquisition functions evaluated in the promising region according to (a) an exact GP trained on all available data and those (b,c,d) arising from SVGPs with the IPAs above. Of the SVGPs, only our proposed IMP-DPP's acquisition function agrees with the exact GP.
}%
\label{fig::inducing_points}%
\end{figure*}

\section{Inducing Point Allocation for Regression}
\label{sec:regression}

Existing IPA strategies include taking a random subset of the data, sampling uniformly across the problem's search space, or using centroids obtained by running a K-means algorithm on the data \citep{hensman2013gaussian}. The remainder of this Section details the recent DPP-based method of \cite{burt2019rates}, laying out important groundwork for our proposed BO-specific IPA strategies. 

\textbf{Determinantal Point Processes.} For regression tasks, a meaningful criterion for IPA would be to have the points spread as uniformly as possible across the input data $X$.
It would also be sensible to have a criterion that takes the kernel and its hyperparameters into account.
\citet{burt2020convergence} showed that one way of achieving these is by using an $M$-determinantal point process \citep[$M$-DPP,][]{kulesza2012determinantal}.
An $M$-DPP chooses the $M$ points in $Z$ by sampling them from the data $X$ with probability proportional to the determinant of the Gram matrix $\KZ$:
\begin{align}
    \mathds{P}(\mathcal{Z}=Z) \propto \big|\KZ\big|.
    \label{eq:DPP}
\end{align}
Notice that this criterion meets our two criteria described above: 1) if two points are close together in $Z$, the determinant will typically be small since the kernel will have high covariance for those points, giving the $M$-DPP repulsive properties so that the selected points have a uniform spread, and 2) the determinant clearly depends on the kernel.
Using results from the $M$-DPP literature, \citet{burt2020convergence} was able to show that sampling inducing points in this way from a DPP will lead to a small expected KL divergence between approximate and true posteriors, $KL[q(f) || p(f|\y)]$.
Moreover, these results have recently been used to prove regret bounds in BO for sparse GP methods \citep{vakili2021scalable}.


\textbf{Conditional Variance Reduction.} In practice, sampling from a DPP is computationally expensive.
Therefore, \citet{burt2020convergence} suggests finding the \emph{maximum a posteriori (MAP)} estimate of a DPP, i.e., finding the set of inducing points $Z$ with maximum probability according to Eq.~\ref{eq:DPP}.
While exact MAP estimation of a DPP is known to be NP-hard \citep{ko1995exact}, \citet{chen2018fast} provides an algorithm for approximate MAP estimation in $O(M^2N)$, which \citet{burt2020convergence} uses in practice.
This algorithm greedily builds its set of points $Z$ by choosing the $j^{th}$ point from $X \setminus Z_{1:j-1}$ as
\begin{align}
    \textbf{z}_j = \argmax_{\textbf{z}\in X \setminus Z_{1:j-1}} \big|K_{Z_{1:j-1}\bigcup \{\textbf{z}\}}\big|.
    \label{eq:DPP_greedy}
\end{align}


Interestingly, this DPP-based IPA strategy (\ref{eq:DPP_greedy}) is equivalent to greedily building a set of inducing points by maximising the posterior predictive variance of a noise-free GP model $f\sim \mathcal{GP}(0, k)$ conditioned on previously selected observations, i.e., choosing 
\begin{align}
\textbf{z}_j = \argmax_{\textbf{z}\in X} \sigma_{j-1}(\textbf{z}),\label{greedy}
\end{align}
where $\sigma_{j-1}^2(\textbf{z}) = k(\textbf{z},\textbf{z})-\textbf{k}_{Z_{1:j-1}}(\textbf{z})^TK_{Z_{1:j-1}}^{-1}\textbf{k}_{Z_{1:j-1}}(\textbf{z})$ is the \emph{conditional variance} of the GP \citep[see ][for detailed description and discussion]{hennig2016exact,burt2019rates}.
Therefore, we refer to this method of selection as conditional variance reduction (CVR), as it selects the datapoint with the highest conditional variance as the next inducing point, in hopes that this variance will be reduced.


\section{Inducing Point Allocation for Bayesian Optimisation}
\label{sec:bo}
BO typically requires updating the surrogate model(s) at each step to leverage the latest information available, so it makes sense to include updating the inducing point locations (see Algorithm \ref{alg:BO}). 
In the case of CVR, which requires a kernel, \cite{vakili2021scalable} use the kernel fitted during the previous BO step.
Unfortunately, as we will demonstrate across all our experimental results, regression-inspired IPA strategies are not satisfactory for use within BO loops. While a level of global accuracy is needed to prevent the re-investigation of areas already identified as sub-optimal, Figure \ref{fig::inducing_points} shows that accurate modelling in promising areas is necessary to allow the precise identification of the optimum. For a more formal intuition into the unsuitability of existing IPA strategies see Appendix \ref{appendix:theory}. For these reasons we now propose DPP-based IPA strategies that are able to change the relative trade-off of local and global modelling capabilities.

\begin{algorithm}
\DontPrintSemicolon
\caption{High-throughput BO with SVGPs}
\label{alg:BO}
\KwIn{Resource Budget $R$, Batch size $B$}
Initialise $n\leftarrow0$ and spent resource counter $r\leftarrow0$ \;
Collect initial design $D_0$ and fit initial model $\mathcal{M}_0$ \;
\While{$r\leq R$}{
    Begin new iteration $n\leftarrow n+1$ \;
    Build IPA $Z_n$ using $D_{n-1}$ and $\mathcal{M}_{n-1}$\;
    Fit model $\mathcal{M}_n$ using IPA $Z_n$ to data $D_{n-1}$ \;
    Generate $B$ query points $\{\textbf{x}_i\}_{i=1}^B$ \;
    Collect evaluations $D_n \leftarrow D_{n-1}\bigcup\{(\textbf{x}_i, y_{\textbf{x}_i})\}_{i=1}^B$\;
    Update spent budget $r\leftarrow r + B$
    \;
}
\KwRet{Believed optimum across $\{\textbf{x}_1,..\textbf{x}_R\}$}
\end{algorithm}

We now provide the primary contribution of this work ---  a general method for IPA suitable for down-stream decision making tasks. Unlike existing IPA strategies, our proposed methods ensure the model focuses its resources on promising (local) areas of the space whilst maintaining a sufficiently accurate global model. 

\subsection{A General IPA Formulation}

\textbf{Quality-Diversity Decomposition.}  Although CVR only leverages the repulsive properties of DPPs, it is also possible, through a convenient reparameterisation, to encode a notion of the quality of the sampled points.
Consider the DPP defined as in (\ref{eq:DPP}) but with $K_Z$ replaced  by 
\begin{align}
L_{Z}=\left[q(\textbf{z}_i)k(\textbf{z}_i,\textbf{z}_j)q(\textbf{z}_j)\right]_{(\textbf{z}_i,\textbf{z}_j)\in Z\times Z},\label{eq:qDPP}
\end{align} 
where $q:\mathcal{X}\rightarrow\mathds{R}$,
i.e., we observe $Z$ with probability
    $\mathds{P}(\mathcal{Z}=Z)\propto \big|L_Z\big|$.
In our case, we can choose $q:\mathcal{X}\rightarrow\mathds{R}^+$ so that it can be seen as a \textit{quality function}, designed to provide large values for points lying in promising areas of the space and low values elsewhere. Indeed, due to the decomposition
\begin{align}
    |L_Z|=|K_Z|*\prod_{i=1}^Nq(\textbf{z}_i)^2, \label{eq:decomp}
\end{align} 
as derived in Section 3.1 of \cite{kulesza2012determinantal}, it is clear that a particular $Z$ will occur with high probability only if it contains points that have large quality scores (as measured by $q(\textbf{z}_i)$) \textbf{and} have a diverse spread (as measured by $|K_Z|$), see Figure \ref{fig::qual}. 
Hence, this constitutes an intuitive tool for building IPAs well-suited to the demands of BO (see Figure \ref{subfig:CIR} for a demonstration, and Appendix \ref{appendix:theory} for a more formal justification).

\begin{figure}%
\subfloat{\includegraphics[height = 0.18\textwidth]{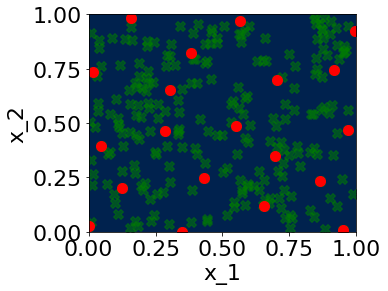}}%
\subfloat{\includegraphics[height = 0.18\textwidth]{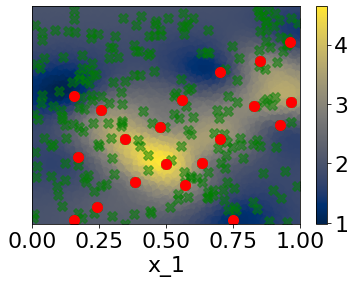}}%
\caption{25 elements (red) chosen from 250 candidates (green) by a DPP with (left)  constant and (right) locally varying quality functions (background colour).}%
\label{fig::qual}%
\end{figure}

\textbf{Greedy (Approximate) Maximisation.} Given a particular $q$, we can simply apply the same greedy algorithm used by CVR, just with $K_Z$ replaced by $L_Z$, to efficiently build a set of BO-specific inducing points as an $O(NM^2)$ approximate MAP estimate of the DPP implied by Eq.~\ref{eq:qDPP}. Conveniently, this resulting MAP estimate has an intuitive interpretation, as specified in Theorem \ref{theorem:greedy}, with proof in Appendix \ref{appendix:theorem1}. 

\begin{restatable}{theorem}{greedy}\label{theorem:greedy} Suppose inducing points $\mathcal{Z}$ are distributed according to a DPP with similarity kernel $k: \mathcal{X}\times\mathcal{X}\rightarrow\mathds{R}$ and quality function $q: \mathcal{X}\rightarrow\mathds{R}$, i.e., $\mathds{P}(\mathcal{Z}=Z)\propto \big|L_Z\big|$. Then, according to the greedy approximation, the $j^{th}$ component of the MAP estimate of $\mathcal{Z}$  is given by
\begin{align}
    \textbf{z}_j = \argmax_{\textbf{z}\in X}\: q(\textbf{z})\sigma_{j-1}(\textbf{z}),\label{eq:greedy2}
\end{align}
where $\sigma^2_{j-1}(\textbf{z})$ is the conditional variance of the noise-free GP model conditioned on the already selected points $Z_{1:j-1}$ (cf. Eq.~\ref{greedy}).
\end{restatable}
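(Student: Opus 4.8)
The plan is to show that the greedy MAP step for the $L$-DPP---maximising $\big|L_{Z_{1:j-1}\cup\{\textbf{z}\}}\big|$ over $\textbf{z}\in X$, exactly as in Eq.~\ref{eq:DPP_greedy} but with $K$ replaced by $L$---collapses precisely to the criterion $q(\textbf{z})\sigma_{j-1}(\textbf{z})$. The engine of the argument is the block-matrix (Schur complement) determinant identity, combined with the quality-diversity factorisation $L_{AB}=D_A K_{AB} D_B$, where for a subset $A\subseteq X$ we write $D_A=\mathrm{diag}(q(\textbf{z}))_{\textbf{z}\in A}$.

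First I would abbreviate $Z := Z_{1:j-1}$ and apply the block-determinant formula to write $\big|L_{Z\cup\{\textbf{z}\}}\big| = \big|L_Z\big|\cdot\big(L_{\textbf{z}\textbf{z}} - L_{\textbf{z}Z}L_Z^{-1}L_{Z\textbf{z}}\big)$. Since $\big|L_Z\big|$ is constant in the candidate $\textbf{z}$, the greedy maximisation over $\textbf{z}$ is equivalent to maximising the Schur complement alone; this is the $L$-DPP analogue of the conditional-variance interpretation of CVR recorded in Eq.~\ref{greedy}.

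Next I would substitute the factorisation block by block: $L_{\textbf{z}\textbf{z}}=q(\textbf{z})^2 k(\textbf{z},\textbf{z})$, $L_{\textbf{z}Z}=q(\textbf{z})\,\textbf{k}_Z(\textbf{z})^T D_Z$, $L_{Z\textbf{z}}=D_Z\,\textbf{k}_Z(\textbf{z})\,q(\textbf{z})$, and $L_Z^{-1}=(D_Z K_Z D_Z)^{-1}=D_Z^{-1}K_Z^{-1}D_Z^{-1}$, where $\textbf{k}_Z(\textbf{z})=\left[k(\textbf{z}',\textbf{z})\right]_{\textbf{z}'\in Z}$ (using symmetry of $k$). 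The key observation is that the diagonal quality matrices $D_Z$ cancel completely inside the quadratic form, so the cross term reduces to $q(\textbf{z})^2\,\textbf{k}_Z(\textbf{z})^T K_Z^{-1}\textbf{k}_Z(\textbf{z})$. Factoring out $q(\textbf{z})^2$ then leaves the Schur complement equal to $q(\textbf{z})^2\big(k(\textbf{z},\textbf{z}) - \textbf{k}_Z(\textbf{z})^T K_Z^{-1}\textbf{k}_Z(\textbf{z})\big) = q(\textbf{z})^2\sigma_{j-1}^2(\textbf{z})$, exactly the squared version of the target objective.

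Finally, because $q$ is taken to be non-negative (it is a quality function, $q:\mathcal{X}\to\mathds{R}^+$) and $\sigma_{j-1}\ge 0$, the square-root map is monotone on the relevant range, so maximising $q(\textbf{z})^2\sigma_{j-1}^2(\textbf{z})$ is equivalent to maximising $q(\textbf{z})\sigma_{j-1}(\textbf{z})$, yielding Eq.~\ref{eq:greedy2}. I do not expect a genuine obstacle here: the whole content is the clean cancellation of the diagonal quality matrices within the Schur complement, so the only care required is correct bookkeeping of the block structure and an explicit appeal to the positivity of $q$ when passing from the squared objective to the unsquared one.
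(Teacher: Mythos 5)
Your proof is correct and takes essentially the same route as the paper's: both reduce the greedy step to the conditional variance under the reparameterised kernel and then exploit the factorisation $L = D_q K D_q$ so that the diagonal quality matrices cancel inside the quadratic form, giving a Schur complement equal to $q(\textbf{z})^2\sigma_{j-1}^2(\textbf{z})$. The only differences are presentational: the paper cites Eq.~(4) of Hennig and Garnett (2016) for the conditional-variance form of the greedy DPP update, whereas you derive it directly from the block-determinant identity, and you are more explicit than the paper in invoking $q \geq 0$ when passing from the squared objective to $q(\textbf{z})\sigma_{j-1}(\textbf{z})$.
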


\subsection{Choosing a Quality Function}
While any quality function can be used, in practice $q$ should be carefully chosen to deliver the right quality-diversity trade-off. Intuitively from Eq.~\ref{eq:qDPP}, the relative amplitudes of the quality function and the similarity kernel are key to this trade off, so for effective IPA, $q$ must be chosen to complement $k$ (rather than dominate or be dominated by it). 

In addition, we propose the following four properties to guide our choice of the quality functions:
%
\begin{itemize}
    \item \textbf{Discriminative:} $q$ should return large values in areas of the space that are worthwhile modelling whilst providing smaller contrasting values elsewhere. This means high values in regions expected to be close to the optimum and/or with large predictive uncertainty.
    \item \textbf{Informative:}
    $q(\textbf{z})$ should encode our current knowledge about the objective function $f$ at $\textbf{z}$,
    which is available through the already-collected evaluations $y_i=f(\textbf{x}_i)$ and/or the surrogate model(s) of the previous BO step. 
    \item 
\textbf{Shift invariance:} the resulting IPA should be invariant to adding an offset to the data. Given a GP model, adding an offset should only affect its mean function and leave the kernel unchanged, which means that the quality function must also be insensitive to shift.
    \item 
\textbf{Scale invariance:} the resulting IPA should be invariant to linear re-scaling of the data. Given a GP model, a multiplicative factor on the data may result in a multiplicative factor on the kernel. Hence, for eq.~\ref{eq:greedy2} to deliver identical results, we need $q$ to be invariant to re-scaling, up to a multiplicative (positive) constant.
%
\end{itemize}



\subsection{A Linear Quality Measure}
\label{subsec:CIR}
We propose here a simple and intuitive choice for the quality function that shows strong empirical performance (see Section \ref{sec:exp}). Many other choices are possible, for example, we also derived a quality function based on information-theoretic considerations. Although well-motivated, we found this entropy-based approach to be less effective, likely due to the computational approximations required, than the simpler choice that we are about to present. To streamline our exposition, the derivation and results of the information-theoretic approach are deferred to Appendix \ref{appendix:info-theory}. 

\textbf{Noise-free evaluations.}
A natural quality function (for a single-objective maximisation problem) that satisfies the four above-mentioned properties is the following linear function of $y_i$:
\begin{align}
    q_{\textrm{Lin}}(\textbf{z}_i) = y_i - \hat{f}, \qquad \text{with } \hat{f} = \min_i y_i. \label{eq:lin}
\end{align}
A linear rescaling of the data will change $q$ by a multiplicative factor only, and subtracting $\hat{f}$ makes it positive and shift invariant. Furthermore, $\hat{f}$ ensures the discriminative property, i.e. that $q$ is zero at the worst observation and largest at the best. 

\textbf{Noisy evaluations.} For problems with large observation noise, $y_i$ can give misleading estimates of $f(\textbf{z}_i)$ and so it is unwise to use (\ref{eq:lin}). However, in these settings, we can make use of the previous BO step's surrogate model $\mathcal{M}_{n-1}$ and instead calculate the expected value of $q_{\textrm{Lin}}$. Additionally, to ensure positivity, we swap the linear function for the (piece-wise linear) Recitified Linear Unit (ReLU), yielding the quality function
\begin{align}
    q_{\textrm{IMP}}(\textbf{z}_i) = \mathds{E}_{f \sim \mathcal{M}_{n-1}}\left[\max(f(\textbf{z}_i)-\hat{f},0)\right],
    \label{eq:relu}
\end{align}
where the baseline is now the minimal predicted value of the objective function, i.e. $\hat{f} = \min_{\textbf{x}\in D_n} \mu_{n-1}(\textbf{x})$ for $\mu_{n-1}(\cdot)$ the posterior mean of $\mathcal{M}_{n-1}$. Note that (\ref{eq:relu}) takes the form of the well-known Expected Improvement (EI), just with a modified baseline, and so can be calculated in closed form \citep[see][]{jones1998efficient}.

Reassuringly, the performance of (\ref{eq:relu}) is robust to the specific choice of baseline, with Appendix \ref{appendix:ablation} showing negligible performance differences when using the minima or mean of the predicted objective function values, or even when using a softplus relaxation of the ReLU. However, significantly tightening the baseline to be the maximum of the objective function (as typically used by EI acquisition function) yields a dramatic drop in performance. Indeed, EI is not designed to discern between all our collected points, only to help identify where there could be new maxima.

\subsection{Beyond Single Objective BO}
The quality function described above is designed for single-objective BO problems. However, our approach based on the quality-diversity decomposition of DPPs is a general way to ensure that sparse models are accurate in the areas where they will be used and so may apply to a much larger variety of optimisation problems, 
including those with constraints, multiple objectives, and more generally active learning problems such as level set estimation. The quality function should be tailored to each problem: for instance in level-set estimation, the important regions to model are not regions where the output value is maximal, but where it is close to the targeted level. In Section \ref{sec:exp} we demonstrate such extensions.

\section{Related Work}
\label{sec:related}

\textbf{Alternative Sparse Surrogate Models.} Three other formulations of sparse GPs have been used in BO loops. Firstly, \cite{mcintire2016sparse} propose a compelling modification of sparse online GPs\citep{csato2002sparse}, where they up-weight promising areas of the feature space (as measured by the expected improvement of candidate evaluations). However, online GPs, which see only a single pass of the data, provide worse approximations than SVGPs, which have multiple chances to learn from each datapoint. Moreover, due to its requirement of $N$ individual challenging optimisations for each individual model fit, this approach is unsuitable for the high-throughput scenarios tackled in this paper (and consequently was only tested by \cite{mcintire2016sparse} on problems with $M=30$ and $N=60$). Another way to alleviate the cost of GP inference is by  approximating the spectral density of its kernel \citep{lazaro2010sparse}. However, spectral approximations are not appropriate for BO as they seek to preserve global structure and, as such, have no way of providing local high-fidelity modelling. Indeed, applying spectral GPs to BO requires expensive and heuristic modifications to its loss function \citep{yang2021sparse} and even then fails to match the performance of exact GPs. In contrast to these two alternatives, our proposed approach retains the state-of-the-art computational complexity and performance of SVGPs. Finally, \cite{maddox2021conditioning} (with similar work by \cite{chang2022fantasizing}) propose the OVC method for the fast conditioning of SVGPS, allowing efficient calculation of popular look-ahead acquisition functions, albeit those outside of the high-throughput domain. 

\textbf{Additional uses of DPPs in BO.} Outside of IPA, DPPs are also commonly used in the context of batch BO, where the goal is to recommend diverse collections of points. 
Prominent examples include the approaches of \cite{kathuria2016batched, dodge2017open} and \cite{nava2022diversified}, as well as \cite{moss2021gibbon} where, similarly to our ENT-DPP, an information-theoretic motivation is used to inform the construction of the DPP's diversity and quality terms. DPPs have also been used in high-dimensional BO \citep{wang2017batched} to sample diverse subsets of the available search space dimensions.

\textbf{Scalable BO via Local Models.} A popular alternative approach for BO under large evaluation budgets is to use multiple cheaper local models in lieu of a single expensive global model \citep{gramacy2015local, rulliere2018nested, cole2021locally, cole2022large}. Particularly powerful BO routines employing local models like TURBO of \citep{eriksson2019scalable} and, for multi-objective BO, MORBO \citep{daulton2022multi} 
are ideal for applications where the only goal is to find a reasonable solution because global modelling (and optimisation) is challenging (e.g., for high-dimensional optimisation problems). However, the local models built by TURBO are not always useful in settings where the goal is to collect data that allows the building of a useful final model, e.g.,  in the active learning applications we consider below or when we need a rough understanding of global behaviour to ensure global convergence. 

\section{Experimental Results}
\label{sec:exp}

We now provide an empirical evaluation of our proposed IPA framework across a suite of high-throughput BO problems using the open-source BO library \textsc{Trieste} \citep{Berkeley_Trieste_2022}. We then illustrate the general applicability of our IPA framework, by demonstrating how quality functions can be designed for multi-objective and active learning problems. Additional experimental details are contained in our appendices. Implementations of our IPAs are contained within the \textsc{Trieste} \citep{Berkeley_Trieste_2022} and  \textsc{BoTorch} \citep{balandat2020botorch} libraries.

\subsection{ Single Objective Optimisation}
For clarity, all our synthetic experiments follow the same setup. We consider an SVGP model with either $M=250$ or $500$ inducing points using either 1) our proposed IPA strategy with the improvement-based quality function (\ref{eq:relu}) which we call IMP-DPP , 2) the CVR of \cite{burt2019rates} (see Section \ref{sec:regression}), 3) choosing the centroids of a K-means clustering of the data, and 4) choosing inducing points spread uniformly across the search space.  SVGP models are fit using an Adam optimiser with learning rate $0.1$, using an early stopping criteria with a patience of $50$ and a learning rate halving on plateau schedule with a patience of $10$.

For all DPP-based IPAs we follow the thoroughly tested approach of \cite{burt2020convergence} and \cite{vakili2021scalable} and use the kernel of the previous BO step's model to allocate the IPA and then refit the kernel when training the current BO step's model on the new data and the chosen IPA. We allocate a total evaluation budget of $N=5{,}000$ evaluations split across $50$ BO steps in batches of $100$ points. When the total number of queried points is less than the desired number of inducing points (e.g. for the first 4 optimisation steps when $M=500$), we use just the $N$ available training points as our IPA.
 Subsequent batches are collected using the decoupled Thompson sampling scheme presented in \cite{vakili2021scalable}. 
We use $100$ random Fourier features to build a Fourier representation of samples and maximise each using an L-BFGS optimiser starting from the best of a random sample of $10{,}000$ points. BO using an exact GP model is included as a baseline; however, we can report only the first $10$ optimisation steps, after which it became prohibitively expensive (i.e., for $N>1{,}000$).

Figure \ref{fig::results} demonstrates optimisation performance across the 4d Shekel, 5d Michalewicz, 5d Ackley, 6d Hartmann, and 4d Rosenbrock functions (see Appendix \ref{appendix:experiments} for definitions), where we have contaminated the evaluations of each with Gaussian noise of variance $0.01$, except for the easier Hartmann where we consider a larger variance of $0.1$. Note that we re-scaled these baselines so that they have a variance of $1.0$ (under random samples across the search space) and so these noise levels are large, resulting in challenging optimisation tasks. Unsurprisingly, greater performance is achieved when using larger number of inducing points for all the considered methods, except for the easier Rosenbrock function, where all methods perform equally well. For the Shekel and Michalewicz functions, only IMP-DPP achieves precise optimisation, even when using just $M=250$ inducing points. For the Michalewicz function, IMP-DPP with $M=500$ provides a dramatic improvement over the other methods. In contrast, on the Hartmann function we see all $M=500$ approaches, as well as IMP-DPP with $M=250$, achieve comparable optimisation performance. In addition to performing improved optimisation, we show (in Appendix \ref{wallclock}) that our SVGP-based approaches incur significantly lower computational overheads than exact GPs. Moreover, unlike the exact GP, the SVGP approaches maintain a constant overhead as BO progresses. 

Interestingly, for some of the more challenging functions considered in Figure \ref{fig::results}, the exact GP leads to optimisation that gets stuck in local minima, whereas the SVGP approaches are able to fully converge. We hypothesise that SVGPs have an advantage in these non-stationary settings as they are able to ignore promising yet not optimal areas of the space that would otherwise mislead the algorithm ---  a helpful consequence of their limited modelling resources. Similar behaviour is noted by \cite{maddox2021conditioning}
 when also using SVGPs for online decision making.

\begin{figure}%
\subfloat[4d Shekel Function]{\includegraphics[height = 0.20\textwidth]{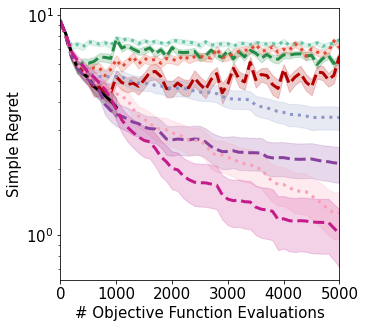}}%
\subfloat[5d Michalewicz Function]{\includegraphics[height = 0.20\textwidth]{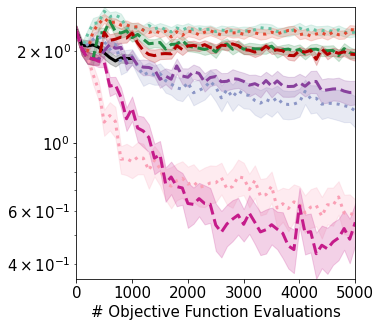}}%
\\
\subfloat[5d Ackley Function]{\includegraphics[height = 0.20\textwidth]{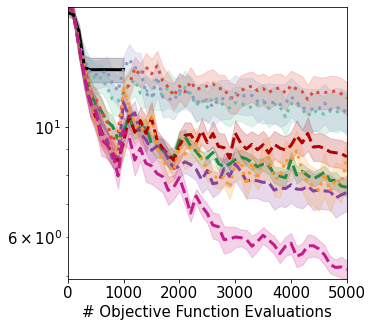}}%
\subfloat[6d Hartmann Function]{\includegraphics[height = 0.20\textwidth]{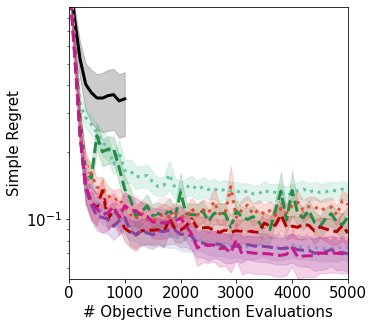}}%
\\
\subfloat[4d Rosenbrock Function]{\includegraphics[height = 0.20\textwidth]{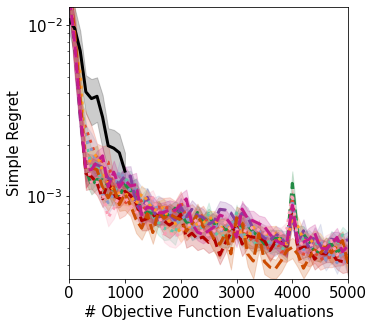}}%
\subfloat{{%
\hspace{-0.7cm}
\setlength{\fboxsep}{5pt}%
\setlength{\fboxrule}{0pt}
\fbox{\includegraphics[height = 0.18\textwidth]{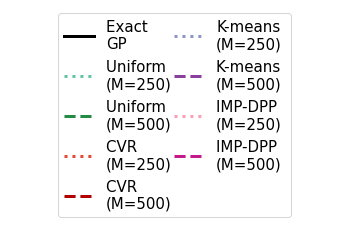}}%
}}%
\caption{ Results are averaged over $50$ runs and we report the mean and its $95\%$ confidence intervals for the simple regret of the maximiser of the
posterior mean across previously queried points. Our proposed IMP-DPP is the only IPA strategy that provides consistently high performance.
}%
\label{fig::results}%
\end{figure}

\subsection{Active Learning}
\begin{figure*}%
\subfloat[Ground truth of test set]{\includegraphics[height= 0.23\textwidth]{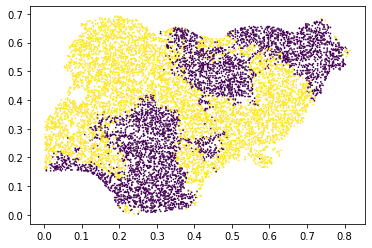}}%
\subfloat[SVGP using CVR ]{\includegraphics[height= 0.23\textwidth]{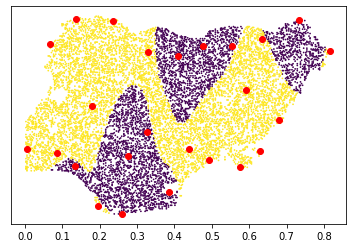}}%
\subfloat[SVGP using our active learning IPA]{\includegraphics[height= 0.23\textwidth]{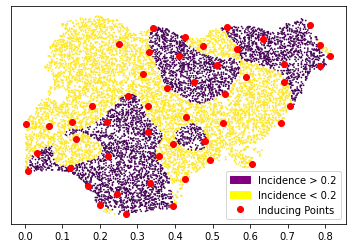}}%
\caption{ Incidence threshold breaches predicted by two surrogate models each fine-tuned over $10$ steps of high-throughput active learning. (a) Performance is evaluated across a randomly sampled held-out test set. (b) CVR fails to accurately learn the complex classification boundary and obtains an accuracy of only $79\%$. (c) In contrast, our proposed IPA focuses inducing points (red dots) along the classification boundary, yielding a improved model with an accuracy of $89\%$.
}%
\label{fig::malaria}%
\end{figure*}

To demonstrate the generality of our proposed IPA framework, we now depart from single-objective BO and instead consider an active learning task inspired by \cite{balandat2020botorch}. We wish to learn which spatial locations in Nigeria have rates of a malaria-causing parasite \textit{Plasmodium Falciparum} over a critical threshold.

We model the occurrence of a breach in the critical threshold at location $\textbf{x}$ through a Bernoulli likelihood $y_{\textbf{x}}|f\sim \mathcal{B}(\Phi(f(\textbf{x})))$ where $f$ denotes a latent sparse GP with $50$ inducing points and $\Phi:\mathds{R}\rightarrow[0,1]$ is the inverse probit function \citep[see][for details]{hensman2015scalable}. Starting from a random initial design of 100 evaluations, we then use the BALD acquisition function of \cite{houlsby2011bayesian} to sequentially improve our model over $10$ data acquisition steps, each time collecting evaluations at $100$ informative locations then updating the classification surrogate models. 

As the performance of the classifier is determined by the accuracy of its classification boundary (i.e., where  $f \approx 0$), it is natural to consider a quality function that encourages the placing of inducing points where $|f|$ is small. To this end, we consider the active learning quality function
\begin{align}
     q_{\textrm{AL}}(\textbf{z}) = \mathds{E}_f\left[ \hat{f} - |f(\textbf{z})|\right], \label{eq:qualityAL}
\end{align}
where $\hat{f}=\max(|\max(f)|, |\min(f)|)$ is the largest absolute value obtained by the latent GP. This quality function has maximal score at $f=0$, i.e., the level set of the latent GP corresponding to the classification boundary. 
Figure \ref{fig::malaria} demonstrates the benefit of using this custom quality function to drive IPA in the considered active learning problem.

 \subsection{Multi-objective Optimisation}

In multi-objective optimisation (MOO) we seek to find high-performing solutions according to $K$ ($\geq 2$) competing objective functions $f^1(\textbf{x}),\dots,f^K(\textbf{x})$. In these tasks, where improvements in one objective may harm another, the ability to characterise trade-offs between these competing objectives becomes crucial. Consequently, multi-objective optimisation corresponds to finding the so-called Pareto set  which contains all locations representing optimal trade-offs, i.e., those that cannot be perturbed to yield an improved score in a single objective without a deterioration in the score of another objective (see \cite{emmerich2005single} for an introduction). Therefore, when using sparse models as surrogate models for  MOO,  it is no longer sufficient to focus modelling resources into the "best" areas of the space; rather, we want to focus coverage around the Pareto front. Therefore we consider the quality function
\begin{align}
    q_{\textrm{HV}}(\textbf{z}) = \mathds{E}_{f_1,\dots,f_K}\left[\prod_{k=1}^K\max(f_k(\textbf{z}_i)-\hat{f}_k,0)\right],
    \label{eq:qual_moo}
\end{align}
where $f_k$ represents the model of the $k^{th}$ objective function and $\hat{f}_k$ its minimal value, i.e., we consider a product of our single objective quality functions. As Eq. \ref{eq:qual_moo} can be interpreted as the Hyper-Volume (HV) of the set containing all the previously collected points that are dominated by $\textbf{z}$, we refer to the IPA resulting from this quality function as HV-DPP. We allocate inducing points for each model separately but use the same shared quality function (that uses information from all the models) to encourage the allocation of points along the Pareto front. Although $q_{\textrm{HV}}$ has a strong bias for points in the central area of the front, it is fast to evaluate and we found it adequate for enabling effective high-throughput BO. Future work will build a more sophisticated quality function that provides an even focus along the whole Pareto front.

\textbf{Synthetic benchmark.} Figure \ref{fig::moo_results} demonstrates high-throughput optimisation of a noisy variant of the 4-dimensional ZDT3 problem (see Appendix \ref{appendix:experiments} for a problem description). We start with 100 random evaluations and use the Chebyshev scalarisation acquisition function described by \cite{paria2020flexible} to collect 50 batches of 100 evaluations for the sparse methods (each with 100 inducing points), and 10 batches for the exact GP. As is standard practice in multi-objective optimisation, we measure performance in terms of the difference between the hyper-volume dominated by the true Pareto optimal front and the one found by BO.

\begin{figure}%
\subfloat[4d ZDT3]{\includegraphics[height= 0.53\columnwidth]{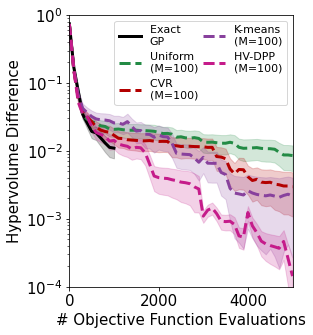}}%
\subfloat[9d Heat Exchanger Design]{\includegraphics[height= 0.52\columnwidth]{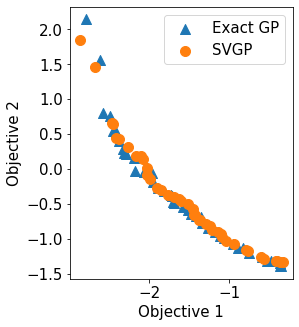}}%
\caption{ Demonstration of high-throughput multi-objective optimisation. (a) Optimisation of the 4-dimensional ZDT3 problem over 50 runs, where only our proposed HV-DPP matches and then builds upon the performance of the exact GP. (b) The Pareto fronts found for the challenging heat exchanger design task by 1) an SVGP with our IPA strategy on only $100$ inducing points alongside, and 2) exact GPs.
}%
\label{fig::moo_results}%
\end{figure}

\textbf{Real-world problem.} For our final example, we turn to the problem of designing an effective yet light-weight automotive heat exchanger (radiator), as considered by \cite{paleyes2022penalisation} (see Appendix \ref{appendix:experiments} for a full description). This challenging 9d problem has two objectives and three constraints, so requires five surrogate models, each of which will need IPA. For the constraint models, we use the $q_{AL}$ quality function (as presented for the active learning task) and for the objective models we use the $q_{HV}$ quality function. We start with $100$ random evaluations and use \cite{paleyes2022penalisation}'s HIPPO acquisition function to allocate $10$ batches of $100$ further evaluations. Figure \ref{fig::moo_results} shows that an SVGP surrogate model using our proposed IPA and only $100$ inducing points is able to find a comparable Pareto front to an expensive exact GP. Moreover, in Appendix \ref{wallclock}, we provide wall-clock timing for these experiments, demonstrating that the SVGP incurs order-of-magnitude lower optimisation overheads than the exact GP.


%

%


\section{Conclusions and Further Work}
We have proposed the first BO-specific methods for selecting the locations of inducing points in sparse GPs. By exploiting the quality-diversity decomposition of DPPs, we are able to dramatically improve DPP-based IPA, transforming what is often, in the context of BO, a poorly performing IPA \citep[the conditional variance reduction of ][]{burt2019rates} to the best (our IMP-DPP). Moreover, we have shown that our proposed framework provides a general framework for ensuring that sparse GPs are accurate in key areas, and so has applications across a range of down-stream tasks.

In future work we will apply our BO-specific IPAs to real-world problems where sparse GPs are already being used, e.g., quantile optimisation \citep{torossian2020bayesian}. We will also investigate their applicability to other inducing point-based methods that are also used in decision making loops, like deep GPs \citep{damianou2013deep}. Moreover, our SVGPs could also be applied to high-dimensional optimisation problems by extending single-model trust region approaches \citep{diouane2022trego} to support large optimisation budgets. Finally, note that our proposed IPA does not require Euclidean input spaces (unlike standard SVGP formulations which optimise inducing point locations using gradient descent). Therefore, we also wish to use our proposed scalable surrogate models to enable high-throughput versions of active learning loops over discrete structures that can be modelled with GPs, e.g., genes \citep{moss2020boss} and molecules \citep{moss2020gaussian,thawani2020photoswitch,griffiths2022gauche,rankovic2022bayesian}.



\bibliography{refs}

\begin{thebibliography}{}

\bibitem[Balandat et~al., 2020]{balandat2020botorch}
Balandat, M., Karrer, B., Jiang, D., Daulton, S., Letham, B., Wilson, A.~G.,
  and Bakshy, E. (2020).
\newblock Bo{T}orch: {A} framework for efficient {M}onte-{C}arlo {B}ayesian
  optimization.
\newblock {\em Advances in Neural Information Processing Systems}.

\bibitem[Belabbas and Wolfe, 2009]{belabbas2009spectral}
Belabbas, M.-A. and Wolfe, P.~J. (2009).
\newblock Spectral methods in machine learning and new strategies for very
  large datasets.
\newblock {\em Proceedings of the National Academy of Sciences},
  106(2):369--374.

\bibitem[Berkeley et~al., 2022]{Berkeley_Trieste_2022}
Berkeley, J., Moss, H.~B., Artemev, A., Pascual-Diaz, S., Granta, U., Stojic,
  H., Couckuyt, I., Qing, J., Loka, N., Paleyes, A., Ober, S.~W., and Picheny,
  V. (2022).
\newblock {Trieste}.
\newblock {\em https://github.com/secondmind-labs/trieste}.

\bibitem[Burt et~al., 2019]{burt2019rates}
Burt, D., Rasmussen, C.~E., and Van Der~Wilk, M. (2019).
\newblock Rates of convergence for sparse variational {G}aussian process
  regression.
\newblock In {\em International Conference on Machine Learning (ICML)}.

\bibitem[Burt et~al., 2020]{burt2020convergence}
Burt, D.~R., Rasmussen, C.~E., and van~der Wilk, M. (2020).
\newblock Convergence of sparse variational inference in {G}aussian processes
  regression.
\newblock {\em Journal of Machine Learning Research (JMLR)}, 21:131:1--131:63.

\bibitem[Chang et~al., 2022]{chang2022fantasizing}
Chang, P.~E., Verma, P., John, S., Picheny, V., Moss, H., and Solin, A. (2022).
\newblock Fantasizing with dual {GP}s in {B}ayesian optimization and active
  learning.
\newblock {\em arXiv preprint arXiv:2211.01053}.

\bibitem[Chen et~al., 2018]{chen2018fast}
Chen, L., Zhang, G., and Zhou, H. (2018).
\newblock Fast greedy {MAP} inference for determinantal point process to
  improve recommendation diversity.
\newblock In {\em Advances Neural Information Processing Systems}.

\bibitem[Cole et~al., 2021]{cole2021locally}
Cole, D.~A., Christianson, R.~B., and Gramacy, R.~B. (2021).
\newblock Locally induced {G}aussian processes for large-scale simulation
  experiments.
\newblock {\em Statistics and Computing}, 31(3):1--21.

\bibitem[Cole et~al., 2022]{cole2022large}
Cole, D.~A., Gramacy, R.~B., and Ludkovski, M. (2022).
\newblock Large-scale local surrogate modeling of stochastic simulation
  experiments.
\newblock {\em Computational Statistics \& Data Analysis}, page 107537.

\bibitem[Cover, 1999]{cover1999elements}
Cover, T.~M. (1999).
\newblock {\em Elements of information theory}.
\newblock John Wiley \& Sons.

\bibitem[Csat{\'o} and Opper, 2002]{csato2002sparse}
Csat{\'o}, L. and Opper, M. (2002).
\newblock Sparse on-line {G}aussian processes.
\newblock {\em Neural computation}, 14(3):641--668.

\bibitem[Damianou and Lawrence, 2013]{damianou2013deep}
Damianou, A. and Lawrence, N.~D. (2013).
\newblock Deep {G}aussian processes.
\newblock In {\em International Conference on Artificial Intelligence and
  Statistics (AISTATS)}.

\bibitem[Daulton et~al., 2022]{daulton2022multi}
Daulton, S., Eriksson, D., Balandat, M., and Bakshy, E. (2022).
\newblock Multi-objective {B}ayesian optimization over high-dimensional search
  spaces.
\newblock In {\em Conference on Uncertainty in Artificial Intelligence (UAI)}.

\bibitem[Diouane et~al., 2022]{diouane2022trego}
Diouane, Y., Picheny, V., Riche, R.~L., and Perrotolo, A. S.~D. (2022).
\newblock {TREGO}: {A} trust-region framework for efficient global
  optimization.
\newblock {\em Journal of Global Optimization}, pages 1--23.

\bibitem[Dodge et~al., 2017]{dodge2017open}
Dodge, J., Jamieson, K., and Smith, N.~A. (2017).
\newblock Open loop hyperparameter optimization and determinantal point
  processes.
\newblock {\em arXiv preprint arXiv:1706.01566}.

\bibitem[Emmerich, 2005]{emmerich2005single}
Emmerich, M. (2005).
\newblock {\em Single-and multi-objective evolutionary design optimization
  assisted by {G}aussian random field metamodels}.
\newblock PhD thesis, Dortmund, Univ., Diss., 2005.

\bibitem[Eriksson et~al., 2019]{eriksson2019scalable}
Eriksson, D., Pearce, M., Gardner, J., Turner, R.~D., and Poloczek, M. (2019).
\newblock Scalable global optimization via local {B}ayesian optimization.
\newblock {\em Advances in Neural Information Processing Systems}.

\bibitem[Frazier et~al., 2008]{frazier2008knowledge}
Frazier, P.~I., Powell, W.~B., and Dayanik, S. (2008).
\newblock A knowledge-gradient policy for sequential information collection.
\newblock {\em SIAM Journal on Control and Optimization}, 47(5):2410--2439.

\bibitem[Gramacy and Apley, 2015]{gramacy2015local}
Gramacy, R.~B. and Apley, D.~W. (2015).
\newblock Local {G}aussian process approximation for large computer
  experiments.
\newblock {\em Journal of Computational and Graphical Statistics},
  24(2):561--578.

\bibitem[Griffiths and Hern{\'a}ndez-Lobato, 2020]{griffiths2020constrained}
Griffiths, R.-R. and Hern{\'a}ndez-Lobato, J.~M. (2020).
\newblock Constrained {B}ayesian optimization for automatic chemical design
  using variational autoencoders.
\newblock {\em Chemical science}, 11(2):577--586.

\bibitem[Griffiths et~al., 2022]{griffiths2022gauche}
Griffiths, R.-R., Klarner, L., Moss, H., Ravuri, A., Truong, S.~T., Rankovic,
  B., Du, Y., Jamasb, A.~R., Schwartz, J., Tripp, A., et~al. (2022).
\newblock {GAUCHE}: {A} library for {G}aussian processes in chemistry.
\newblock In {\em ICML 2022 2nd AI for Science Workshop}.

\bibitem[Hennig and Garnett, 2016]{hennig2016exact}
Hennig, P. and Garnett, R. (2016).
\newblock Exact sampling from determinantal point processes.
\newblock {\em arXiv preprint arXiv:1609.06840}.

\bibitem[Hennig and Schuler, 2012]{hennig2012entropy}
Hennig, P. and Schuler, C.~J. (2012).
\newblock Entropy search for information-efficient global optimization.
\newblock {\em Journal of Machine Learning Research (JMLR)}, 13(6).

\bibitem[Hensman et~al., 2013]{hensman2013gaussian}
Hensman, J., Fusi, N., and Lawrence, N.~D. (2013).
\newblock Gaussian processes for big data.
\newblock {\em arXiv preprint arXiv:1309.6835}.

\bibitem[Hensman et~al., 2015]{hensman2015scalable}
Hensman, J., Matthews, A., and Ghahramani, Z. (2015).
\newblock Scalable variational {G}aussian process classification.
\newblock In {\em International Conference on Artificial Intelligence and
  Statistics (AISTATS)}.

\bibitem[Hern{\'a}ndez-Lobato et~al., 2014]{hernandez2014predictive}
Hern{\'a}ndez-Lobato, J.~M., Hoffman, M.~W., and Ghahramani, Z. (2014).
\newblock Predictive entropy search for efficient global optimization of
  black-box functions.
\newblock {\em arXiv preprint arXiv:1406.2541}.

\bibitem[Hern{\'a}ndez-Lobato et~al., 2017]{hernandez2017parallel}
Hern{\'a}ndez-Lobato, J.~M., Requeima, J., Pyzer-Knapp, E.~O., and
  Aspuru-Guzik, A. (2017).
\newblock Parallel and distributed {T}hompson sampling for large-scale
  accelerated exploration of chemical space.
\newblock In {\em International Conference on Machine Learning (ICML)}.

\bibitem[Houlsby et~al., 2011]{houlsby2011bayesian}
Houlsby, N., Husz{\'a}r, F., Ghahramani, Z., and Lengyel, M. (2011).
\newblock Bayesian active learning for classification and preference learning.
\newblock {\em arXiv preprint arXiv:1112.5745}.

\bibitem[Jones et~al., 1998]{jones1998efficient}
Jones, D.~R., Schonlau, M., and Welch, W.~J. (1998).
\newblock Efficient global optimization of expensive black-box functions.
\newblock {\em Journal of Global Optimization}, 13(4):455--492.

\bibitem[Kandasamy et~al., 2018]{kandasamy2018parallelised}
Kandasamy, K., Krishnamurthy, A., Schneider, J., and P{\'o}czos, B. (2018).
\newblock Parallelised {B}ayesian optimisation via {T}hompson sampling.
\newblock In {\em International Conference on Artificial Intelligence and
  Statistics (AISTATS)}.

\bibitem[Kathuria et~al., 2016]{kathuria2016batched}
Kathuria, T., Deshpande, A., and Kohli, P. (2016).
\newblock Batched {G}aussian process bandit optimization via determinantal
  point processes.
\newblock {\em Advances in Neural Information Processing Systems}.

\bibitem[Ko et~al., 1995]{ko1995exact}
Ko, C.-W., Lee, J., and Queyranne, M. (1995).
\newblock An exact algorithm for maximum entropy sampling.
\newblock {\em Operations Research}, 43(4):684--691.

\bibitem[Kulesza and Taskar, 2012]{kulesza2012determinantal}
Kulesza, A. and Taskar, B. (2012).
\newblock Determinantal point processes for machine learning.
\newblock {\em arXiv preprint arXiv:1207.6083}.

\bibitem[L{\'a}zaro-Gredilla et~al., 2010]{lazaro2010sparse}
L{\'a}zaro-Gredilla, M., Quinonero-Candela, J., Rasmussen, C.~E., and
  Figueiras-Vidal, A.~R. (2010).
\newblock Sparse spectrum {G}aussian process regression.
\newblock {\em The Journal of Machine Learning Research (JMLR)}, 11:1865--1881.

\bibitem[Maddox et~al., 2021]{maddox2021conditioning}
Maddox, W.~J., Stanton, S., and Wilson, A.~G. (2021).
\newblock Conditioning sparse variational {G}aussian processes for online
  decision-making.
\newblock {\em Advances in Neural Information Processing Systems}.

\bibitem[Matthews et~al., 2016]{matthews2016sparse}
Matthews, A. G. d.~G., Hensman, J., Turner, R., and Ghahramani, Z. (2016).
\newblock On sparse variational methods and the {K}ullback-{L}eibler divergence
  between stochastic processes.
\newblock In {\em International Conference on Artificial Intelligence and
  Statistics (AISTATS)}.

\bibitem[McIntire et~al., 2016]{mcintire2016sparse}
McIntire, M., Ratner, D., and Ermon, S. (2016).
\newblock Sparse {G}aussian processes for {B}ayesian optimization.
\newblock In {\em Conference on Uncertainty in Artificial Intelligence (UAI)}.

\bibitem[Moss et~al., 2020a]{moss2020boss}
Moss, H., Leslie, D., Beck, D., Gonzalez, J., and Rayson, P. (2020a).
\newblock {BOSS}: {B}ayesian optimization over string spaces.
\newblock {\em Advances in Neural Information Processing Systems}.

\bibitem[Moss and Griffiths, 2020]{moss2020gaussian}
Moss, H.~B. and Griffiths, R.-R. (2020).
\newblock Gaussian process molecule property prediction with {F}low{MO}.
\newblock {\em arXiv preprint arXiv:2010.01118}.

\bibitem[Moss et~al., 2021]{moss2021gibbon}
Moss, H.~B., Leslie, D.~S., Gonzalez, J., and Rayson, P. (2021).
\newblock {GIBBON}: {G}eneral-purpose information-based {B}ayesian
  optimisation.
\newblock {\em arXiv preprint arXiv:2102.03324}.

\bibitem[Moss et~al., 2020b]{moss2020bosh}
Moss, H.~B., Leslie, D.~S., and Rayson, P. (2020b).
\newblock {BOSH}: {B}ayesian optimization by sampling hierarchically.
\newblock {\em arXiv preprint arXiv:2007.00939}.

\bibitem[Moss et~al., 2020c]{moss2020mumbo}
Moss, H.~B., Leslie, D.~S., and Rayson, P. (2020c).
\newblock {MUMBO}: Multi-task max-value {B}ayesian optimization.
\newblock {\em arXiv preprint arXiv:2006.12093}.

\bibitem[Moss et~al., 2022]{moss2022information}
Moss, H.~B., Ober, S.~W., and Picheny, V. (2022).
\newblock Information-theoretic inducing point placement for high-throughput
  {B}ayesian optimisation.
\newblock {\em arXiv preprint arXiv:2206.02437}.

\bibitem[Nava et~al., 2022]{nava2022diversified}
Nava, E., Mutny, M., and Krause, A. (2022).
\newblock Diversified sampling for batched {B}ayesian optimization with
  determinantal point processes.
\newblock In {\em International Conference on Artificial Intelligence and
  Statistics (AISTATS)}.

\bibitem[Nickson et~al., 2014]{nickson2014automated}
Nickson, T., Osborne, M.~A., Reece, S., and Roberts, S.~J. (2014).
\newblock Automated machine learning on big data using stochastic algorithm
  tuning.
\newblock {\em arXiv preprint arXiv:1407.7969}.

\bibitem[Paleyes et~al., 2022]{paleyes2022penalisation}
Paleyes, A., Moss, H.~B., Picheny, V., Zulawski, P., and Newman, F. (2022).
\newblock A penalisation method for batch multi-objective {B}ayesian
  optimisation with application in heat exchanger design.
\newblock {\em arXiv preprint arXiv:2206.13326}.

\bibitem[Paria et~al., 2020]{paria2020flexible}
Paria, B., Kandasamy, K., and P{\'o}czos, B. (2020).
\newblock A flexible framework for multi-objective {B}ayesian optimization
  using random scalarizations.
\newblock In {\em Conference on Uncertainty in Artificial Intelligence (UAI)}.

\bibitem[Picheny et~al., 2022]{picheny2022bayesian}
Picheny, V., Moss, H., Torossian, L., and Durrande, N. (2022).
\newblock Bayesian quantile and expectile optimisation.
\newblock In {\em Conference on Uncertainty in Artificial Intelligence (UAI)}.

\bibitem[Qing et~al., 2022]{qing2022text}
Qing, J., Moss, H.~B., Dhaene, T., and Couckuyt, I. (2022).
\newblock Parallel feasible pareto frontier entropy search for multi-objective
  {B}ayesian optimization under unknown constraints.
\newblock {\em arXiv preprint arXiv:2204.05411}.

\bibitem[Rankovi{\'c} et~al., 2022]{rankovic2022bayesian}
Rankovi{\'c}, B., Griffiths, R.-R., Moss, H.~B., and Schwaller, P. (2022).
\newblock Bayesian optimisation for additive screening and yield improvements
  in chemical reactions--beyond one-hot encodings.
\newblock In {\em 2022 ELLIS Ml4Molecules Workshop}.

\bibitem[Rasmussen and Williams, 2006]{rasmussen2006gaussian}
Rasmussen, C.~E. and Williams, C.~K. (2006).
\newblock {G}aussian {P}rocesses for {M}achine {L}earning.
\newblock {\em ISBN-13 978-0-262-18253-9}.

\bibitem[Ru et~al., 2018]{ru2018fast}
Ru, B., Osborne, M.~A., McLeod, M., and Granziol, D. (2018).
\newblock Fast information-theoretic {B}ayesian optimisation.
\newblock In {\em International Conference on Machine Learning (ICML)}.

\bibitem[Rulli{\`e}re et~al., 2018]{rulliere2018nested}
Rulli{\`e}re, D., Durrande, N., Bachoc, F., and Chevalier, C. (2018).
\newblock Nested kriging predictions for datasets with a large number of
  observations.
\newblock {\em Statistics and Computing}, 28(4):849--867.

\bibitem[Shahriari et~al., 2016]{shahriari2016taking}
Shahriari, B., Swersky, K., Wang, Z., Adams, R.~P., and De~Freitas, N. (2016).
\newblock Taking the human out of the loop: {A} review of {B}ayesian
  optimization.
\newblock {\em Proceedings of the IEEE}.

\bibitem[Srinivas et~al., 2009]{srinivas2009gaussian}
Srinivas, N., Krause, A., Kakade, S.~M., and Seeger, M. (2009).
\newblock Gaussian process optimization in the bandit setting: {N}o regret and
  experimental design.
\newblock {\em arXiv preprint arXiv:0912.3995}.

\bibitem[Takeno et~al., 2019]{takeno2019multi}
Takeno, S., Fukuoka, H., Tsukada, Y., Koyama, T., Shiga, M., Takeuchi, I., and
  Karasuyama, M. (2019).
\newblock Multi-fidelity {B}ayesian optimization with max-value entropy search.
\newblock {\em arXiv preprint arXiv:1901.08275}.

\bibitem[Thawani et~al., 2020]{thawani2020photoswitch}
Thawani, A.~R., Griffiths, R.-R., Jamasb, A., Bourached, A., Jones, P.,
  McCorkindale, W., Aldrick, A.~A., and Lee, A.~A. (2020).
\newblock The photoswitch dataset: {A} molecular machine learning benchmark for
  the advancement of synthetic chemistry.
\newblock {\em arXiv preprint arXiv:2008.03226}.

\bibitem[Titsias, 2009]{titsias2009variational}
Titsias, M. (2009).
\newblock Variational learning of inducing variables in sparse {G}aussian
  processes.
\newblock In {\em International Conference on Artificial Intelligence and
  Statistics (AISTATS)}.

\bibitem[Torossian et~al., 2020]{torossian2020bayesian}
Torossian, L., Picheny, V., and Durrande, N. (2020).
\newblock Bayesian quantile and expectile optimisation.
\newblock {\em arXiv preprint arXiv:2001.04833}.

\bibitem[Vakili et~al., 2021]{vakili2021scalable}
Vakili, S., Moss, H., Artemev, A., Dutordoir, V., and Picheny, V. (2021).
\newblock Scalable {T}hompson sampling using sparse {G}aussian process models.
\newblock {\em Advances in Neural Information Processing Systems}.

\bibitem[Wang and Jegelka, 2017]{wang2017max}
Wang, Z. and Jegelka, S. (2017).
\newblock Max-value entropy search for efficient {B}ayesian optimization.
\newblock In {\em International Conference on Machine Learning (ICML)}.

\bibitem[Wang et~al., 2017]{wang2017batched}
Wang, Z., Li, C., Jegelka, S., and Kohli, P. (2017).
\newblock Batched high-dimensional {B}ayesian optimization via structural
  kernel learning.
\newblock In {\em International Conference on Machine Learning (ICML)}.

\bibitem[Wilson et~al., 2020]{wilson2020efficiently}
Wilson, J., Borovitskiy, V., Terenin, A., Mostowsky, P., and Deisenroth, M.
  (2020).
\newblock Efficiently sampling functions from {G}aussian process posteriors.
\newblock In {\em International Conference on Machine Learning (ICML)}.

\bibitem[Yang et~al., 2021]{yang2021sparse}
Yang, A., Li, C., Rana, S., Gupta, S., and Venkatesh, S. (2021).
\newblock Sparse spectrum {G}aussian process for {B}ayesian optimization.
\newblock In {\em Pacific-Asia Conference on Knowledge Discovery and Data
  Mining}, pages 257--268. Springer.

\bibitem[Zitzler et~al., 2000]{zitzler2000comparison}
Zitzler, E., Deb, K., and Thiele, L. (2000).
\newblock Comparison of multiobjective evolutionary algorithms: {E}mpirical
  results.
\newblock {\em Evolutionary computation}, 8(2):173--195.

\end{thebibliography}

\appendix
\onecolumn

\section{Theoretical Justification}
\label{appendix:theory}
For a more formal intuition into the unsuitability of existing IPA strategies for BO and the suitability of our proposed method we can apply Lemma 4 of \cite{burt2020convergence}, which bounds the approximation error between an exact GP and its sparse approximation and is restated below for convenience.
\begin{theorem}\textbf{(Burt et al., 2020, Lemma 4)}
\label{theorem:burt_orig}
Suppose $\textbf{y}|X,Z\sim\mathcal{N}(0,K_{X}+\sigma^2I_N)$. Then, for our exact posterior $P$ and our variational posterior $Q$, we have that for any $X$ and $Z$
\begin{align*}
    t(Z)/(2\sigma^2)\leq \mathds{E}\left[\textrm{KL}[Q||P]|Z,X\right]\leq t(Z)/\sigma^2,
\end{align*}
where $KL[Q||P]$ denotes the Kullback-Leibler divergence and $t(Z) = tr(K_{X} - Q_{X}(Z))$,  for  $Q_{X}(Z):= K_{Z}(X)^TK_{Z}^{-1}K_{Z}(X)$. 

\end{theorem}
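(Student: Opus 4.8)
The plan is to route the expected KL through the standard variational identity that links it to the gap between the log marginal likelihood and the evidence lower bound, and then to reduce everything to a Gaussian-versus-Gaussian KL that can be controlled purely by positive-semidefinite ordering.

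First I would invoke the identity $\log p(\mathbf{y}) = \mathcal{L} + \textrm{KL}[Q\|P]$, which holds because $Q$ is the process posterior induced by the optimal (collapsed) variational distribution over $\uz$ and $P = p(f|\mathbf{y})$; rearranging gives $\textrm{KL}[Q\|P] = \log p(\mathbf{y}) - \mathcal{L}$. Substituting the exact marginal $\log p(\mathbf{y}) = \log\mathcal{N}(\mathbf{y}; 0, K_X + \sigma^2 I_N)$ and Titsias's optimal bound $\mathcal{L} = \log\mathcal{N}(\mathbf{y}; 0, Q_X(Z) + \sigma^2 I_N) - \tfrac{1}{2\sigma^2}t(Z)$ yields, writing $A := K_X + \sigma^2 I_N$ and $B := Q_X(Z) + \sigma^2 I_N$,
\[
\textrm{KL}[Q\|P] = \log\frac{\mathcal{N}(\mathbf{y};0,A)}{\mathcal{N}(\mathbf{y};0,B)} + \frac{1}{2\sigma^2}t(Z).
\]

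Next I would take $\mathds{E}_{\mathbf{y}\sim\mathcal{N}(0,A)}$ of both sides. The expected log-density ratio is, by definition, exactly $\textrm{KL}[\mathcal{N}(0,A)\|\mathcal{N}(0,B)]$. Setting $C := K_X - Q_X(Z)$, which is PSD because $Q_X(Z)$ is a Nyström approximation of $K_X$ (the Schur complement of the joint kernel matrix), we have $A = B + C$ and $t(Z) = \textrm{tr}(C) \geq 0$. The standard zero-mean Gaussian KL formula then collapses to
\[
\mathds{E}\big[\textrm{KL}[Q\|P]\,\big|\,Z,X\big] = \tfrac{1}{2}\big(\textrm{tr}(B^{-1}C) - \log|I_N + B^{-1}C|\big) + \frac{1}{2\sigma^2}\textrm{tr}(C),
\]
using $\textrm{tr}(B^{-1}A) - N = \textrm{tr}(B^{-1}C)$ and $\log(|B|/|A|) = -\log|I_N + B^{-1}C|$.

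Finally I would bound the bracketed term in both directions. Writing $\lambda_i \geq 0$ for the eigenvalues of the PSD matrix $B^{-1/2}C B^{-1/2}$, the bracket equals $\sum_i(\lambda_i - \log(1+\lambda_i)) \geq 0$ by the scalar inequality $x \geq \log(1+x)$, which immediately gives the lower bound $\mathds{E}[\textrm{KL}] \geq t(Z)/(2\sigma^2)$. For the upper bound I would discard the nonnegative $\log|I_N + B^{-1}C|$ term and use $B \succeq \sigma^2 I_N$, hence $B^{-1} \preceq \sigma^{-2} I_N$, so that $\textrm{tr}(B^{-1}C) = \textrm{tr}(C^{1/2}B^{-1}C^{1/2}) \leq \sigma^{-2}\textrm{tr}(C)$; combining with the explicit trace term yields $\mathds{E}[\textrm{KL}] \leq \textrm{tr}(C)/\sigma^2 = t(Z)/\sigma^2$. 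The main obstacle is the second step: correctly identifying $\mathcal{L}$ with the collapsed optimal-$q(\uz)$ bound so that the trace term $\tfrac{1}{2\sigma^2}t(Z)$ separates cleanly and the residual density ratio is genuinely between two Gaussians. Once that reduction is secured, the remaining work is the scalar inequality $x \geq \log(1+x)$ and the ordering $B^{-1} \preceq \sigma^{-2}I_N$, both routine.
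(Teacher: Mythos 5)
Your proof is correct. Be aware, though, that the paper contains no proof of this statement to compare against: it is restated ``for convenience'' from Lemma 4 of \citet{burt2020convergence}, purely as imported background used to derive the paper's Corollary 1, and the reader is deferred to that reference. What you have written is essentially a faithful reconstruction of Burt et al.'s original argument: the gap identity $\textrm{KL}[Q\|P]=\log p(\textbf{y})-\mathcal{L}$ (valid at the process level by \citet{matthews2016sparse}, provided $Q$ is the posterior induced by the collapsed optimal $q(\uz)$, as you correctly insist in your flagged ``main obstacle''), Titsias's collapsed bound to split off the explicit $t(Z)/(2\sigma^2)$ term, the observation that taking $\mathds{E}_{\textbf{y}\sim\mathcal{N}(0,A)}$ of the log-density ratio yields exactly $\textrm{KL}[\mathcal{N}(0,A)\,\|\,\mathcal{N}(0,B)]$, and two-sided control of $\tfrac{1}{2}\sum_i\left(\lambda_i-\log(1+\lambda_i)\right)$ for the eigenvalues $\lambda_i\geq 0$ of $B^{-1/2}CB^{-1/2}$. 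Each step checks out: $C=K_X-Q_X(Z)\succeq 0$ because it is the Schur complement of $K_Z$ in the PSD joint kernel matrix over $Z\cup X$; the scalar inequality $x\geq\log(1+x)$ gives the lower bound; and $Q_X(Z)\succeq 0$ gives $B\succeq\sigma^2 I_N$, hence $\textrm{tr}(B^{-1}C)=\textrm{tr}(C^{1/2}B^{-1}C^{1/2})\leq\sigma^{-2}t(Z)$, which combined with dropping the nonnegative log-determinant term gives the upper bound $t(Z)/\sigma^2$. So your proposal is not merely correct but coincides with the route taken in the cited source; there is nothing to amend.
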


Now, we can state our Corollary \ref{theorem:burt} which bounds the maximal and minimal discrepancy between an SVGP and an exact GP over a localised area.
\begin{corollary}\label{theorem:burt}
Suppose $\textbf{y}|\textbf{X},\textbf{Z}\sim\mathcal{N}(0,K_{X}+\sigma^2I_N)$. Then, for our exact posterior $P$ and our variational posterior $Q$, we have that for any subspace $A\subseteq\mathcal{X}$
\begin{align*}
    t_A(Z)/(2\sigma^2)\leq \mathds{E}\left[\textrm{KL}_A[Q||P]|Z,X\right]\leq t_A(Z)/\sigma^2.
\end{align*}
Here $KL_A[Q||P]$ denotes the Kullback-Leibler divergence calculated only over datapoints contained in the subspace $A\subseteq\mathcal{X}$ and $t_A(Z) = tr(K_{X_A} - Q_{X_A}(Z))$,  where $X_A$ denotes the subset of data $X$ lying in $A$ and $Q_{X_A}(Z):= K_{Z}(X_A)^TK_{Z}^{-1}K_{Z}(X_A)$. 
\end{corollary}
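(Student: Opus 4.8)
The plan is to recognise Corollary \ref{theorem:burt} as a direct localisation of Theorem \ref{theorem:burt_orig}, rather than to reprove the two-sided bound from scratch. I would apply the original lemma verbatim to the sub-problem obtained by restricting the training data to $X_A$, the inputs lying in $A$, while holding the inducing points $Z$ fixed. The quantities $KL_A$ and $t_A(Z)$ in the statement are then exactly the $KL$ and $t(Z)$ of Theorem \ref{theorem:burt_orig} instantiated on this restricted problem, so the claimed inequalities follow immediately once the instantiation is justified.

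First I would verify that the distributional hypothesis of Theorem \ref{theorem:burt_orig} transfers to the sub-problem. Since the joint observation vector satisfies $\mathbf{y}\mid X,Z\sim\mathcal{N}(0,K_X+\sigma^2 I_N)$ by assumption, the marginalisation property of multivariate Gaussians gives that the observations at the $A$-indices obey $\mathbf{y}_A\mid X_A,Z\sim\mathcal{N}(0,K_{X_A}+\sigma^2 I)$, where $K_{X_A}$ is the principal submatrix of $K_X$ indexed by $X_A$. This is precisely the hypothesis needed to invoke the lemma with $X_A$ playing the role of $X$.

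Next I would identify the remaining objects. The Nyström term for the restricted problem is $Q_{X_A}(Z)=K_Z(X_A)^T K_Z^{-1}K_Z(X_A)$, which is simply the diagonal block of the full matrix $Q_X(Z)$ indexed by $X_A$; consequently $t_A(Z)=\mathrm{tr}(K_{X_A}-Q_{X_A}(Z))$ is the partial sum $\sum_{n:\,\mathbf{x}_n\in A}\big[k(\mathbf{x}_n,\mathbf{x}_n)-\mathbf{k}_Z(\mathbf{x}_n)^T K_Z^{-1}\mathbf{k}_Z(\mathbf{x}_n)\big]$ of the per-datapoint conditional-variance terms that make up $t(Z)$. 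With the hypothesis verified and these identifications in place, Theorem \ref{theorem:burt_orig} applied to the pair $(X_A,Z)$ yields $t_A(Z)/(2\sigma^2)\leq \mathds{E}[KL_A[Q||P]\mid Z,X]\leq t_A(Z)/\sigma^2$, which is the claim. I would close by remarking that, because $t(Z)$ decomposes additively over data points, concentrating inducing points inside $A$ (as our quality function encourages) directly shrinks $t_A(Z)$ and hence the localised approximation error.

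I expect the main obstacle to be conceptual rather than computational: pinning down the precise meaning of ``the $KL$ divergence calculated only over datapoints contained in $A$'' so that it genuinely coincides with the $KL$ of the standalone localised regression problem on $X_A$ with inducing points $Z$. The care lies in arguing that restricting attention to the $A$-block reduces the full model to exactly the sub-problem to which Theorem \ref{theorem:burt_orig} applies --- in particular that the Gaussian marginalisation above is the correct reduction --- after which no further work is required, since the original lemma holds for arbitrary data and inducing configurations.
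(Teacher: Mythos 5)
Your proposal is correct and takes essentially the same route as the paper's own proof: both identify $KL_A[Q||P]$ with the KL divergence of the standalone regression problem on $X_A$ (keeping the same inducing set $Z$) and then invoke Theorem \ref{theorem:burt_orig} with $K_{X_A}$ and $Q_{X_A}$ in place of $K_X$ and $Q_X$. Your additional check that the Gaussian marginalisation gives $\mathbf{y}_A \mid X_A, Z \sim \mathcal{N}(0, K_{X_A}+\sigma^2 I)$ merely makes explicit a hypothesis the paper leaves implicit.
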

\begin{proof}
Noting that
\begin{align}
    KL_A[Q||P]=KL\left[\mathcal{N}(0, K_{X_A}+\sigma^2 I)||\mathcal{N}(0, Q_{X_A} + \sigma^2I)\right] \nonumber,
\end{align}
we can directly apply Lemma 4 of  \cite{burt2020convergence} but with $K_{X_A}$ instead of $K_{X}$ and $Q_{X_A}$ instead of $Q_{X}$ to get the required result.
\end{proof}

By applying Corollary \ref{theorem:burt} in the case $A=\mathcal{X}$, \cite{burt2019rates} justify that the goal of IPA strategies should be to minimise the trace term $t_{\mathcal{X}}(Z)$, as minimising $t_{\mathcal{X}}(Z)$ ensures a small divergence between our SVGP and the exact GP \citep[a notion formalised in][]{matthews2016sparse}, which in turn \citep[through Proposition 1 of][]{burt2019rates} ensures a small approximation error across the whole search space. Sampling inducing points from a DPP can then be justified as, under such a sampling scheme, the resulting  $t_{\mathcal{X}}(Z)$ lying within a constant factor of its minimal achievable value \citep[see Theorem 1 of][]{belabbas2009spectral}.

Now suppose that, as is the case in BO (see Figure \ref{fig::inducing_points}), we want to ensure our variational approximation is especially accurate in a particular region $A\subseteq \mathcal{X}$, i.e., we seek IPAs with small values of $t_{A}(Z)$. However, noting the decomposition $t_{\mathcal{X}}(Z)=t_{A}(Z)+t_{\mathcal{X}/A}(Z)$, it is clear that the minimisation of $t_A(Z)$ forms only  a small part of the overall IPA objective targeted by DPP sampling, especially in BO applications where the promising areas $A$ often contains only a small proportion of the search space.

We can now understand the benefit of including a quality function in our DPP through Corollary \ref{theorem:quality}, an extension of Lemma 1 of \cite{belabbas2009spectral}. Before presenting our Corollary, we restate this Lemma as it is used by \cite{burt2020convergence} to justify using a DPP for IPA. 

\begin{theorem}\textbf{(Belabbas and Wolfe, 2009, Theorem 1)}
\label{theorem:belabbas}
Let  $\eta_1\geq ... \geq \eta_N \geq 0 $ be the eigenvalues of the SPSD matrix $K_X$. Suppose a set of points $Z$ are sampled according to an $M$-determinantal point process with kernel matrix $K$. Define the matrix $Q_X(Z) = K_Z(X)^TK_Z^{-1}K_Z(X)$ and the trace term $t(Z) = tr(K_X - Q_X(Z))$. Then, 
\begin{align}
    \mathds{E}\left[ t(Z))\right]\leq (M + 1) \sum_{m=M+1}^N \eta_m.
\end{align}
\end{theorem}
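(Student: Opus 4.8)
The plan is to compute $\mathds{E}[t(Z)]$ exactly and then bound the resulting ratio of symmetric functions. First I would record two elementary facts about the principal minors of $K_X$. The sum of all $M\times M$ principal minors equals the $M$-th elementary symmetric polynomial of the eigenvalues, $\sum_{|Z|=M}\det(K_Z)=e_M(\eta_1,\dots,\eta_N)$, which is precisely the normalising constant of the $M$-DPP. The second is the Schur-complement identity $\det(K_{Z\cup\{i\}})=\det(K_Z)\,(K_{ii}-K_{iZ}K_Z^{-1}K_{Zi})$ for any $i\notin Z$.

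The key observation is that the bracketed quantity is exactly the $i$-th diagonal entry of the Nyström residual $K_X-Q_X(Z)$. Since this residual vanishes on the rows and columns indexed by $Z$, the trace reduces to a sum over $i\notin Z$, giving the pointwise identity
\begin{align*}
\det(K_Z)\,t(Z)=\sum_{i\notin Z}\det(K_{Z\cup\{i\}}).
\end{align*}
Summing over all $Z$ with $|Z|=M$, and noting that each $(M+1)$-subset $W$ arises as $Z\cup\{i\}$ in exactly $M+1$ ways, the DPP-weighted numerator collapses to $(M+1)\sum_{|W|=M+1}\det(K_W)=(M+1)\,e_{M+1}(\eta)$. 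Dividing by the normaliser yields the exact formula
\begin{align*}
\mathds{E}[t(Z)]=(M+1)\,\frac{e_{M+1}(\eta)}{e_M(\eta)}.
\end{align*}

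It then remains to show $e_{M+1}(\eta)\le e_M(\eta)\sum_{m=M+1}^N\eta_m$. For this I would index each $(M+1)$-subset $W$ by its largest index $j$ (equivalently, by its smallest eigenvalue, since $\eta_1\ge\cdots\ge\eta_N$), writing $W=S\cup\{j\}$ with $S\subseteq\{1,\dots,j-1\}$ and $|S|=M$; necessarily $j\ge M+1$. This decomposes $e_{M+1}(\eta)=\sum_{j=M+1}^N \eta_j\,e_M(\eta_1,\dots,\eta_{j-1})$, and since every eigenvalue is nonnegative, $e_M(\eta_1,\dots,\eta_{j-1})\le e_M(\eta_1,\dots,\eta_N)=e_M(\eta)$. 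Substituting this bound and combining with the exact formula delivers the claim.

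The main obstacle is the exact expectation computation: the argument hinges on recognising the diagonal of the Nyström residual as a Schur complement, hence as a ratio of nested principal minors, which is what converts a DPP-average of a trace into the clean ratio $e_{M+1}/e_M$. Once that identity is in place, the final inequality is a short and standard manipulation of elementary symmetric polynomials that exploits the eigenvalue ordering; I would only need to keep track of the mild nondegeneracy $e_M(\eta)>0$ (i.e.\ $\mathrm{rank}(K_X)\ge M$) so that the $M$-DPP and the ratio are well defined.
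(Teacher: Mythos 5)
Your proof is correct. Note, however, that the paper itself offers no proof of this statement: it is imported verbatim as an external result (Belabbas and Wolfe, 2009, Theorem 1) and used as a black box to derive Corollary \ref{theorem:quality}, so there is no internal argument to compare against. What you have written is essentially the original Belabbas--Wolfe derivation: the Schur-complement identity $\det(K_{Z\cup\{i\}})=\det(K_Z)\bigl(K_{ii}-K_{iZ}K_Z^{-1}K_{Zi}\bigr)$ identifies the diagonal of the Nystr\"om residual with ratios of nested principal minors, which turns the DPP-weighted trace into the exact expression
\begin{align*}
\mathds{E}\left[t(Z)\right]=(M+1)\,\frac{e_{M+1}(\eta)}{e_M(\eta)},
\end{align*}
and the bound $e_{M+1}(\eta)\le e_M(\eta)\sum_{m=M+1}^N\eta_m$ follows from your largest-index decomposition together with monotonicity of elementary symmetric polynomials in additional nonnegative arguments. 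All steps check out, including the fact that the residual vanishes on rows and columns indexed by $Z$ (so the trace is a sum over $i\notin Z$), the $(M+1)$-fold overcounting when regrouping by $(M+1)$-subsets, and the rank condition $e_M(\eta)>0$ needed for the $M$-DPP to be well defined. A pleasant byproduct of your route, worth keeping in mind, is that the exact identity for $\mathds{E}\left[t(Z)\right]$ is strictly stronger than the stated inequality and immediately transfers to the quality-weighted kernel $L_X$ used in Corollary \ref{theorem:quality}.
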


Theorem \ref{theorem:belabbas} tells us that using an M-DPP to perfom IPA will make $t(Z)$ relatively close to its minimal value of $\sum_{m=M+1}^N \eta_m$ \citep[see Section 4.2.1 of][for a discussion of why $\min\limits_{Z\in\mathcal{X}^M}t(Z) = \sum_{m=M+1}^N \eta_m$]{burt2020convergence}. Therefore, in the context of Theorem \ref{theorem:burt_orig}, Theorem \ref{theorem:belabbas} guarantees low KL divergence between an exact GP and the sparse approximation arising from a DPP IPA strategy.

Now we can present our Corollary \ref{theorem:quality}, which adapts the above results for M-DPPs with quality functions (i.e., $K\rightarrow L$). In particular, we show that when sampling from a DPP with a quality function $q$, we are guaranteed (in expectation) to have $t^q(Z)$ lying within a constant factor of its minimal achievable value. Therefore, by increasing $q$ in our promising region $A$ and reducing it elsewhere, we increase the contribution of the individual components corresponding to the terms of $t_A(Z)$ in the overall objective  $t^q(Z)$, thus ensuring low KL divergence between our SVGP and the exact posterior in $A$.

\begin{corollary}\label{theorem:quality}
Suppose that $Z$ is sampled from a DPP with quality function $q: \mathcal{X}\rightarrow\mathds{R}^+$. Then
\begin{align*}
    \mathds{E}\left[ t^q(Z) \right] \leq (M+1) \hat{t}^q
\end{align*}
for a $q$-weighted trace term
$t^q(Z) = \sum_{i=1}^N q(\textbf{x}_i)^2\left(\left[K_{X}\right]_{i,i} - \left[Q_X(Z)\right]_{i,i}\right)$ and its minimal achievable value $\hat{t}^q =  \min\limits_{Z'\in\mathcal{X}^M} t^q(Z')$.
\end{corollary}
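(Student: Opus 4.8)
The plan is to reduce the quality-weighted statement to the unweighted Belabbas--Wolfe bound (Theorem \ref{theorem:belabbas}) by observing that a DPP with quality function $q$ is simply an ordinary DPP whose kernel matrix is $L = D K_X D$, where $D = \mathrm{diag}(q(\textbf{x}_1), \dots, q(\textbf{x}_N))$ collects the quality values on its diagonal; this is exactly the quality--diversity decomposition of Eq.~\ref{eq:decomp}. Since $q: \mathcal{X} \rightarrow \mathds{R}^+$, the matrix $D$ is invertible and $L$ is symmetric positive semi-definite, so Theorem \ref{theorem:belabbas} applies verbatim with $K_X$ replaced by $L$, yielding $\mathds{E}[\tilde{t}(Z)] \leq (M+1)\sum_{m=M+1}^N \tilde{\eta}_m$, where $\tilde{t}(Z) = \mathrm{tr}(L_X - \tilde{Q}_X(Z))$, $\tilde{Q}_X(Z) = L_Z(X)^T L_Z^{-1} L_Z(X)$, and the $\tilde{\eta}_m$ are the eigenvalues of $L$.

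The crux is then to show that this transformed trace term coincides with $t^q(Z)$. Writing $L_Z(X) = D_Z K_Z(X) D$, $L_Z = D_Z K_Z D_Z$ and hence $L_Z^{-1} = D_Z^{-1} K_Z^{-1} D_Z^{-1}$, where $D_Z$ is the diagonal block of $D$ indexed by the selected points, I would substitute into the Nystr\"om formula and observe that the inner $D_Z$ factors cancel, giving the key identity $\tilde{Q}_X(Z) = D\, Q_X(Z)\, D$. Since $L_X = D K_X D$ as well, the cyclic property of the trace together with the diagonality of $D$ gives $\tilde{t}(Z) = \mathrm{tr}\big(D^2(K_X - Q_X(Z))\big) = \sum_{i=1}^N q(\textbf{x}_i)^2\big([K_X]_{i,i} - [Q_X(Z)]_{i,i}\big) = t^q(Z)$, which is precisely the definition in the statement.

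Finally, I would identify the eigenvalue tail $\sum_{m=M+1}^N \tilde{\eta}_m$ with $\hat{t}^q$. Applying the minimal-trace characterisation used by \cite{burt2020convergence} (their Section 4.2.1) to the kernel $L$ gives $\min_{Z'} \tilde{t}(Z') = \sum_{m=M+1}^N \tilde{\eta}_m$; but since $\tilde{t}(Z') = t^q(Z')$ for every candidate set $Z'$ by the previous step, this minimum equals $\hat{t}^q = \min_{Z' \in \mathcal{X}^M} t^q(Z')$. Chaining the bound with these two equalities yields $\mathds{E}[t^q(Z)] \leq (M+1)\hat{t}^q$, as required.

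I expect the only delicate point to be the algebraic identity $\tilde{Q}_X(Z) = D\, Q_X(Z)\, D$: one must carefully track which diagonal matrix ($D$ over all of $X$ versus its block $D_Z$ over the selected subset) multiplies each factor, and verify that the $D_Z$ terms introduced by $L_Z^{-1}$ exactly cancel those coming from the cross-covariances $L_Z(X)$. This cancellation is what makes the reduction exact, and it is here that the positivity of $q$ is needed so that $D_Z^{-1}$ exists; the remaining steps follow mechanically from Theorem \ref{theorem:belabbas} and the eigenvalue characterisation.
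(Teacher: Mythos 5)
Your proposal is correct and follows essentially the same route as the paper's own proof: apply Theorem \ref{theorem:belabbas} with $K_X$ replaced by $L_X = D K_X D$, verify via the cancellation of the diagonal $D_Z$ factors that the resulting Nystr\"om trace term equals $t^q(Z)$, and identify the bound's right-hand side with $(M+1)\hat{t}^q$. If anything, you are slightly more explicit than the paper in spelling out the key identity $\tilde{Q}_X(Z) = D\,Q_X(Z)\,D$ and the final eigenvalue-tail-to-minimum step, which the paper compresses into ``routine algebraic manipulations'' and a direct appeal to the minimal-trace characterisation.
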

\begin{proof}

A direct application of Theorem \ref{theorem:belabbas} but with $K_X$ replaced by 
\begin{align*}
    L_X = \left[q(\textbf{x}_i)k(\textbf{x}_i, \textbf{x}_j)q(\textbf{x}_j)\right]_{(\textbf{x}_i,\textbf{x}_j)\in X\times X}
\end{align*}
 yields 
\begin{align*}
    \mathds{E}\left[ t_*(Z) \right] \leq (M+1) \min\limits_{Z\in\mathcal{X}^M} \hat{t}_*(Z),
\end{align*}
for $t_*(Z) = tr(L_{X} - L_Z(X)^TL_Z^{-1}L_Z(X))$. All that remains is to show that $t^q(Z) = t_*(Z)$.

Note that we can write $L_Z(X) = D_q(Z) K_Z(X) D_q(X)$, where $D_q(X)$ is a diagonal matrix with non-zero entries given by the vector $\left[q(\textbf{x})\right]_{\textbf{x}\in \mathcal{X}}$. Therefore, after routine algebraic manipulations, we have

\begin{align*}
    t_*(Z) &= tr(L_X - L_Z(X)^TL_Z^{-1}L_Z(X))\\
   &=tr(D_q(X)K_XD_q(X) - D_q(X)Q_X(Z)D_q(X))\\
   &=  \sum_{i=1}^N q(\textbf{x}_i)^2\left(\left[K_{X}\right]_{i,i} - \left[Q_X(Z)\right]_{i,i}\right) \\
   &= t_q(Z)
\end{align*}

\end{proof}

To help explain why Corollary \ref{theorem:quality} justifies the use of DPPs with quality functions as IPA strategies in BO, consider the simple demonstrative binary quality function  
\begin{align*}
    q(\textbf{z})=
\begin{cases}
\sqrt{\beta} \: &\textrm{for} \: \textbf{z}\in A,\\
\sqrt{1-\beta} \: & \textrm{otherwise},
\end{cases}
\end{align*}
under which  $t^q(Z) = \beta^2 * t_A(Z) + (1-\beta)^2*t_{\mathcal{X}/A}(Z)$. Therefore, by varying the quality function (through $\beta$), we re-weight the contributions $t_A(Z)$ and $t_{\mathcal{X}/A}(Z)$ in $t^q(Z)$, i.e., we change the relative trade-off allocated by our DPP on local (inside $A$) and global (outside $A$) modelling.  Although our practical recommendations for quality functions in Section \ref{sec:bo} are much more sophisticated than this binary function, the intuition remains the same.

\section{Proof of Theorem \ref{theorem:greedy}}
\label{appendix:theorem1}

We now restate and prove Theorem \ref{theorem:greedy} that demonstrates the effect of the choice of $q$ on the inducing points chosen by the IPA. This Theorem is used in the main paper to explain the constraints imposed upon $q$ in order to achieve scale and translation invariant IPA strategies.

\greedy*


\begin{proof}

As derived in Eq. (4) of \cite{hennig2016exact}, greedy maximisation of a DPP with a kernel $k$ corresponds to setting the $j^{th}$ component of the MAP estimate of as
\begin{align}
    \textbf{z}_j = \argmax_{\textbf{z}\in X}\: \sigma_{j-1}(\textbf{z}),\nonumber
\end{align}
where $\sigma_{j-1}^2(\textbf{z}) = k(\textbf{z},\textbf{z})-\textbf{k}_{Z_{1:j-1}}(\textbf{z})^TK_{Z_{1:j-1}}^{-1}\textbf{k}_{Z_{1:j-1}}(\textbf{z})$.

We can apply exactly the same derivation to a DPP with a  similarity kernel $l(\textbf{x},\textbf{x}')= q(\textbf{x})k(\textbf{x},\textbf{x}')q(\textbf{x}')$, i.e., as arising from including a quality function $q$, to get

\begin{align}
    \textbf{z}_j = \argmax_{\textbf{z}\in X}\: \hat{\sigma}_{j-1}(\textbf{z}), \label{eq:sigma}
\end{align}
where  $\hat{\sigma}_{j-1}^2(\textbf{z}) = l(\textbf{z},\textbf{z})-\textbf{l}_{Z_{1:j-1}}(\textbf{z})^TL_{Z_{1:j-1}}^{-1}\textbf{l}_{Z_{1:j-1}}(\textbf{z})$ for $\textbf{l}_{Z_{1:j-1}}(\textbf{z}) = \left[l(\textbf{z}', \textbf{z})\right]_{\textbf{z}' \in \mathcal{Z}_{1:j-1}}$ and $L_{Z_{1:j-1}} = \left[l(\textbf{z},\textbf{z}')\right]_{\textbf{z},\textbf{z}' \in Z_{1:j-1} \times Z_{1:j-1}}$.

Note that we can expand $\textbf{l}_{Z_{1:j-1}}(\textbf{z}) = D_q(Z_{1:j-1})\textbf{k}_{Z_{1:j-1}}(\textbf{z})q(\textbf{z})$ and $L_{Z_{1:j-1}} = D_q(Z_{1:j-1}) K_{Z_{1:j-1}}D_q(Z_{1:j-1})$ for a diagonal matrix $D_q(Z_{1:j-1})$ with non-zero entries given by the vector $[q(\textbf{z}_i)]_{i=1}^{j-1}$. Therefore, we can expand Equation \ref{eq:sigma} as 

\begin{align*}
    \hat{\sigma}_{j-1}^2(\textbf{z}) = &  q(\textbf{z})k(\textbf{z},\textbf{z})q(\textbf{z}) \\
    &- q(\textbf{z})\textbf{k}_{Z_{1:j-1}}^T(\textbf{z})D_q(Z_{1:j-1})\\&*D_q(Z_{1:j-1})^{-1}K_{Z_{1:j-1}}^{-1}D_q(Z_{1:j-1})^{-1}\\&*D_q(Z_{1:j-1})\textbf{k}_{Z_{1:j-1}}(\textbf{z})D_q(Z_{1:j-1})q(\textbf{z}) \\
    = & q(\textbf{z})^2 \sigma_{j-1}^2(\textbf{z}).
\end{align*}

Therefore, for the greedy MAP estimate of a DPP with quality function $q$, we have
\begin{align}
    \textbf{z}_j &= \argmax_{\textbf{z}\in X}\: \hat{\sigma}_{j-1}(\textbf{z}) \\\nonumber
    &= \argmax_{\textbf{z}\in X}\:q(\textbf{z})\sigma_{j-1}(\textbf{z}).\nonumber
\end{align}

\end{proof}

\section{An Information-theoretic Approach}
\label{appendix:info-theory}

We now propose our second quality function, this time chosen such that the resulting IPA, which we name ENT-DPP, can be viewed as maximising an intuitive information-theoretic quantity well-aligned with the goal of BO. This work is also available in the non-archival technical report \citet{moss2022information}.

\textbf{Reducing global uncertainty}. For GP regression tasks it is sufficient to choose points that provide maximal information about the whole objective function $f$. In fact, the arguments of \cite{srinivas2009gaussian} and \cite{hennig2016exact} demonstrate that providing the maximal reduction in uncertainty corresponds exactly to the DPP MAP objective (\ref{eq:DPP}) targeted by the CVR IPA strategy. We repeat this result below as Theorem \ref{theorem:ED}. See  \cite{srinivas2009gaussian} for a discussion of information theory in the context of GP learning and \cite{cover1999elements} for a general introduction.
\begin{theorem}\label{theorem:ED} Under a GP prior $f\sim\mathcal{GP}(0,k)$, maximising the M-DPP objective (\ref{eq:DPP}) is equivalent to choosing our inducing points $Z$ as the datapoints with evaluations $\textbf{y}_Z$ that provide the largest gain in information about the unknown function $f$
\begin{align*}
    Z=\argmax_{Z'\subseteq X : |Z'|=M} \textrm{IG}(\textbf{y}_{Z'};f).
\end{align*}
where $\textrm{IG}(\textbf{y}_Z,f) = H(f)-H(f|\textbf{y}_Z)$ quantifies the reduction in the differential entropy $H$ of $f$ provided by revealing the evaluations $\textbf{y}_Z$.
\end{theorem}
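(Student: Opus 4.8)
The plan is to exploit the symmetry of mutual information together with the closed-form Gaussian differential entropy, following the standard GP information-theoretic argument of \cite{srinivas2009gaussian} and \cite{hennig2016exact}. The key move is to write the information gain as the mutual information $\textrm{IG}(\textbf{y}_Z; f) = I(\textbf{y}_Z; f)$ and flip the conditioning, $I(\textbf{y}_Z; f) = H(\textbf{y}_Z) - H(\textbf{y}_Z \mid f)$, thereby trading the entropy of the infinite-dimensional object $f$ for the entropy of the finite observation vector $\textbf{y}_Z$, which is what connects to $\big|\KZ\big|$.

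First I would argue that the observations depend on $f$ only through the finite restriction $\textbf{f}_Z := f(Z)$: since the likelihood noise is independent of the prior, $f \to \textbf{f}_Z \to \textbf{y}_Z$ forms a Markov chain, so $H(\textbf{y}_Z \mid f) = H(\textbf{y}_Z \mid \textbf{f}_Z)$. This conditional term is simply the entropy of the observation model and does not depend on the locations $Z$ (only on $M$ and on the noise level). Consequently the entire $Z$-dependence of $\textrm{IG}$ is carried by the marginal entropy $H(\textbf{y}_Z)$.

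Next I would compute $H(\textbf{y}_Z)$ in closed form. Under the zero-mean GP prior the marginal is $\textbf{f}_Z \sim \mathcal{N}(0, \KZ)$, so in the noise-free model $\textbf{y}_Z = \textbf{f}_Z \sim \mathcal{N}(0, \KZ)$ and the Gaussian entropy formula gives $H(\textbf{y}_Z) = \tfrac12 \log\!\big((2\pi e)^M \big|\KZ\big|\big) = \tfrac12 \log\big|\KZ\big| + c$, with $c$ independent of $Z$. Combining this with the previous step yields $\textrm{IG}(\textbf{y}_Z; f) = \tfrac12 \log\big|\KZ\big| + \text{const}$, and because $\log(\cdot)$ (and the squaring implicit in the DPP density) is monotone increasing, $\argmax_{Z} \textrm{IG}(\textbf{y}_Z; f) = \argmax_{Z} \big|\KZ\big|$, which is exactly the $M$-DPP MAP objective of Eq.~(\ref{eq:DPP}).

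The main obstacle is the degeneracy of the noise-free conditional entropy $H(\textbf{y}_Z \mid f)$, which is formally $-\infty$ since $\textbf{y}_Z$ is a deterministic function of $f$. The cleanest rigorous route is to carry an observation noise $\sigma^2 > 0$, giving $\textrm{IG} = \tfrac12 \log\big|I + \sigma^{-2}\KZ\big| + \text{const}$, and then take $\sigma \to 0$: writing $\tfrac12 \log\big|I + \sigma^{-2}\KZ\big| = \tfrac12 \log\big|\KZ\big| - M\log\sigma + o(1)$ shows the $Z$-dependent part converges to $\tfrac12 \log\big|\KZ\big|$ up to the $Z$-independent divergence $-M\log\sigma$, which is harmless for the $\argmax$. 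Alternatively, since the only relevant quantity is the $Z$-dependence and it resides entirely in $H(\textbf{y}_Z)$, one may simply absorb the constant $H(\textbf{y}_Z \mid f)$ into the additive constant and invoke monotonicity; I would present this shortcut in the body with a remark pointing to the noisy limit for full rigour.
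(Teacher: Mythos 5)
Your proof is correct and follows essentially the same route as the paper: the paper states Theorem \ref{theorem:ED} without a self-contained proof, deferring to \cite{srinivas2009gaussian} and \cite{hennig2016exact}, whose standard derivation is exactly yours --- flip the information gain by symmetry of mutual information, note that $H(\textbf{y}_Z|f)$ is $Z$-independent, apply the Gaussian entropy formula to get $\tfrac{1}{2}\log|\KZ|$ up to additive constants, and conclude by monotonicity. Your explicit handling of the noise-free degeneracy via the $\sigma \to 0$ limit is a welcome touch of rigour that the paper (and its cited sources) leave implicit.
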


\textbf{Reducing Uncertainty in $f^*$.} In BO it is natural to also consider how much information is provided about the function's maximum value $f^* = \max f(\textbf{x})$. Taking inspiration from the empirical success of max-value entropy search acquisition functions \citep{wang2017max, takeno2019multi, moss2020mumbo,moss2020bosh,qing2022text}, where query points in BO loops are chosen to reduce our current uncertainty in $f^*$, we thus propose choosing inducing points $Z$ that maximise the following trade-off criterion:
\begin{align}
      C_{\alpha}(Z) = \alpha\times\textrm{IG}(\textbf{y}_{Z};f^*) + (1-\alpha)\times\textrm{IG}(\textbf{y}_{Z};f).
    \label{eq:MES}
\end{align}
where the information gain $\textrm{IG}(\textbf{y}_Z,f^*) = H(f^*)-H(f^*|\textbf{y}_Z)$ quantifies the reduction in uncertainty provided by these inducing points about the maximum $f^*$ \citep[see][for a derivation]{wang2017max}, and $\textrm{IG}(\textbf{y}_Z;f)$ is as in Theorem \ref{theorem:ED}. $\alpha \in [0,1]$ controls the balance of modelling resources spent on global variations and those spent in areas around potential maxima. Note that setting $\alpha=1$ returns the criterion targeted by CVR.

\textbf{Approximation with a DPP.} The lack of closed-form expression for the distribution of $f^*$ renders the calculation of this criterion (\ref{eq:MES}) challenging, and it is unclear how this objective fits into our DPP framework. Fortunately, there exists a rich literature of methods for approximately optimising similar information-theoretic quantities \citep{hennig2012entropy,hernandez2014predictive}. In particular, 
we can follow the ideas of \cite{moss2021gibbon} and use common information-theoretic inequalities to replace our desired criterion $C_{\alpha}(Z)$ with a simpler lower bound that takes the form of a DPP. Specifically, we use the well-known inequality $H(\textbf{y}_{Z}|f^*)\leq \sum_{i=1}^MH(y_{\textbf{z}_i}|f^*)$.
After simple mathematical manipulations, the resulting lower bound for the trade-off in entropy can be expressed in the following form:
\begin{align}
    C_{\alpha}(Z) \geq \frac{1-\alpha}{2}\log \big| L_{Z}\big|, \label{lowerbound} 
\end{align}
where $L_{Z}$ is defined as in (\ref{eq:qDPP}) but with quality function
\begin{align}
    q(\textbf{z}) = \exp\left(\frac{1}{M}\frac{\alpha}{1-\alpha}\textrm{IG}(y_{\textbf{z}};f^*)\right), 
    \label{quality}
\end{align}
 a function increasing in $\textrm{IG}(y_{\textbf{z}};f^*)$ --- the reduction in uncertainty provided about $f^*$ by a single observation $y_{\textbf{z}}$. 
 Although we cannot calculate $\textrm{IG}(y_{\textbf{z}};f^*)$ in (\ref{quality}) exactly, we can follow \cite{ru2018fast} and approximate $q(\textbf{z})$ with moment-matching. 

\begin{figure}%
\centering
\subfloat[4d Shekel Function]{\includegraphics[height = 0.27\textwidth]{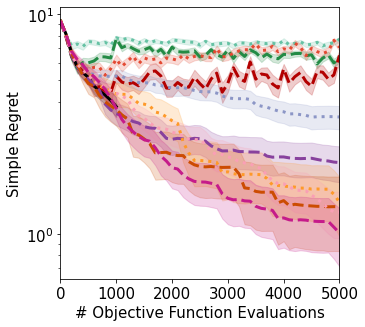}}%
\subfloat[5d Michalewicz Function]{\includegraphics[height = 0.27\textwidth]{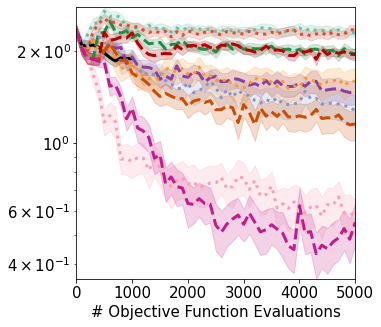}}%
\subfloat[5d Ackley Function]{\includegraphics[height = 0.27\textwidth]{ackley_final.png}}%
\\
\subfloat[6d Hartmann Function]{\includegraphics[height = 0.27\textwidth]{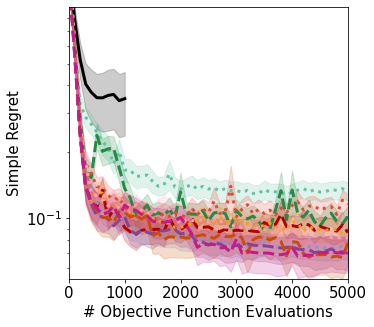}}%
\subfloat[4d Rosenbrock Function]{\includegraphics[height = 0.27\textwidth]{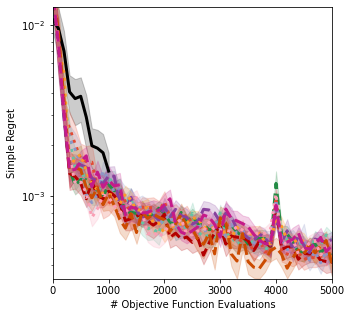}}%
\subfloat{{%
\hspace{-0.7cm}
\setlength{\fboxsep}{5pt}%
\setlength{\fboxrule}{0pt}
\fbox{\includegraphics[height = 0.25\textwidth]{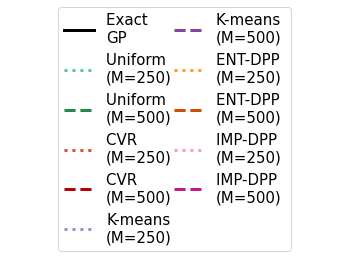}}%
}}%
\caption{ Results are averaged over $50$ runs and we report the mean and its $95\%$ confidence intervals for the simple regret of the maximiser of the
posterior mean across previously queried points. Although ENT-DPP does improve over existing approaches across most of the test functions, our IMP-DPP achieves much more impressive and consistent performance.
}%
\label{fig:ent}
\end{figure}

\textbf{Experimental Results.} We now test the IPA resulting from the entropy-based quality function \ref{quality}, which we name ENT-DPP. All our experiments use $\alpha=0.5$ as we found performance to be unaffected by all but extreme choices of $\alpha$ (i.e., those very close to $0$ or $1$). In Figure \ref{fig:ent}, we repeat all the single-objective benchmark experiments presented in the main body of the paper. We see that, although providing a performance boost over existing IPA, our well-motivated ENT-DPP is not as performant as our (simpler) IMP-DPP. Indeed, the superiority of IMP-DPP over this carefully constructed ENT-DPP provides yet further justification for the practical use of IMP-DPP. He believe that the poor performance of ENT-DPP is due to the required approximations, which degrade as we increase the number of inducing points. More precisely, the bound $H(\textbf{y}_Z;f^*)\leq\sum_{i=1}^MH(y_{\textbf{z}_i};f^*)$ becomes looser as we increase the number $M$ of inducing points $\textbf{z}_i$ and they become closer together (on average).

\section{DPP-IMP Ablation Study}
\label{appendix:ablation}

In order to justify the quality function used for our proposed IMP-DPP IPA strategy, we now perform an ablation study where we demonstrate the robustness of this IPA to small changes to its quality function. We compare the IPA resulting from the proposed quality function (the expected improvement with respect to  the minimal predicted value of the function) with some other reasonable choices. In particular, Figure \ref{fig:abl} demonstrates that IMP-DPP still outperforms existing IPA methods even when swapping its ReLU with a softmax approximation or when using the mean predicted value of the function instead of the minimal predicted value as its baseline.

\begin{figure}%
\centering
\includegraphics[width = 0.6\textwidth]{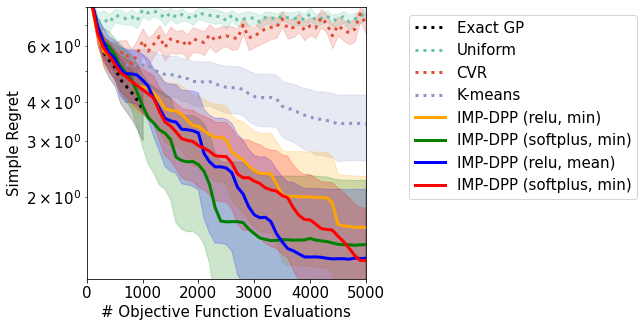}%
\caption{\label{fig:abl} Regret achieved on the 4d Shekel function for an exact GP and SVGPs with $M=250$ inducing points. All four variants of our IMP-DPP outperfom exisiting IPA strategies and achieve statistically similar results.
}
\end{figure}

\section{Additional Experimental Details}
\label{appendix:experiments}

\textbf{Synthetic baselines.} For details on the Shekel, Hartmann and Ackley functions see Appendix C of \cite{vakili2021scalable}, and for the multi-objective ZDT3 see \cite{zitzler2000comparison}.

We now present the analytical forms of the remaining Michalewicz and Rosenbrock functions.

\textbf{Michalewicz function.} A five-dimensional function with $5!$ local minima and a single global minimum defined on $\mathcal{X}\in[0,\pi]^5$:
\begin{align*}
    f(\textbf{x}) = -\sum_{i=1}^d\sin(x_i)\sin^{20}(\frac{ix_i^2}{\pi}).
\end{align*}

\textbf{Rosenbrock function.} A unimodal four-dimensional function a single global minimum lying in a narrow valley defined on $\mathcal{X}\in[-5,10]^5$:
\begin{align*}
    f(\textbf{x}) = \sum_{i=1}^3\left[100(x_{i+1}-x_i^2) + (x_i-1)^2\right].
\end{align*}

\textbf{Heat exchanger design.} For our final experiment, we considered the real-world constrained multi-objective problem of designing a both effective and light-weight heat exchanger (radiator) for a car, as parameterised by its dimensions and internal geometry.  Performance is simulated using areo-thermal analysis as part of an expensive-to-evaluate and highly non-linear digital twin. There are 6 continuous and 3 discrete inputs and three constraints that ensure a minimal standard of performance and safety. All our models use Mat\'{e}rn-5/2 kernels and a constant mean function. See \cite{paleyes2022penalisation} for further details about this problem.

\section{Wall-clock Timings}
\label{wallclock}

\textbf{6d Hartmann.} Figure \ref{fig:time} demonstrates the significant reduction in overhead (i.e., the computation required to generate the next set of query points) provide by using SVGPs instead of their exact counterparts. The experiments for Figure \ref{fig:time} were performed on a quad core 2.40GHz Intel Xeon CPU. We highlight that the GP is more expensive than the SVGP even when $N<M$. This perhaps surprising observation is due to the different optimisers used which have different early-stopping tolerances, i.e. the GP uses LBFGS whereas the SVGP uses Adam. Although the GP could likely be made a bit cheaper through appropriate tuning of the LBFGS stopping criterion, it will still be significantly more expensive than the SVGP when $M<<N$.

\begin{figure}[h]%
\begin{center}
    \includegraphics[width= 0.6\columnwidth]{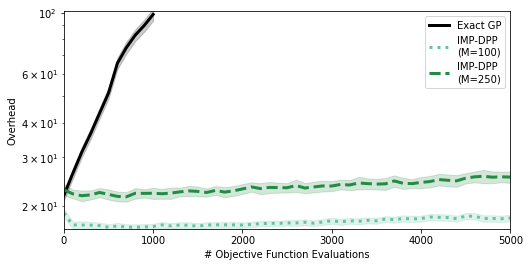}%
\end{center}
\caption{ The computational overhead incurred by using different surrogate models when optimising the 6d Hartmann function.  We see that our SVGP-based approaches have significantly lower overheads than the prohibitively expensive exact GP. Moreover, unlike the exact GP, the SVGP approaches maintain a constant overhead as BO progresses. 
}%
\label{fig:time}
\end{figure}

\textbf{Heat exchanger design.} In the main body of the paper we showed that an SVGP surrogate model using our proposed IPA and only $100$ inducing points is able to find a comparable Pareto front to an expensive exact GP. We now present additional insights through Figure \ref{fig:wallclock} which shows the total time spent running BO for each experiment. These wall-clock timings only measure the time spent fitting models and maximising acquisition functions, not the time spent evaluating the heat exchanger simulator. Figure \ref{fig:wallclock} demonstrates that the SVGP incurs order-of-magnitude lower optimisation overheads than the exact GP, opening up the feasibility of using BO on only moderately expensive functions, not just those with function query costs sufficiently large to overshadow the very significant optimisation overheads incurred by the exact GP models. These experiments were performed on a single NVIDIA GeForce GTX 1070 GPU. 

\begin{figure}%
\centering\includegraphics[width= 0.6\columnwidth]{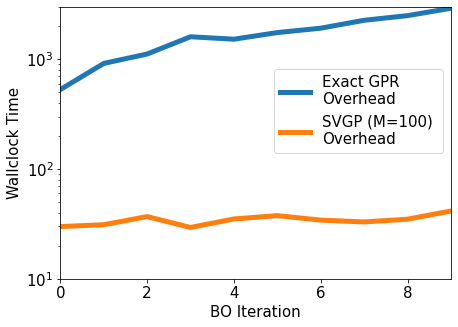}%
\caption{\label{fig:wallclock} Time taken (on a log scale) to fit the surrogate models and maximise acquisition functions for each BO iteration of our heat exchanger design problem. BO with an SVGP incurs order of magnitude lower costs than BO with an exact GP. Note also that the exact GP gets increasingly expensive as the optimisation progresses, whereas the SVGP's overhead remains roughly the same.
}
\end{figure}

\end{document}